\documentclass[english,letter]{article}
\usepackage{times}
\usepackage{graphicx} 
\usepackage{subcaption} 
\usepackage{natbib}
\usepackage{algorithm}
\usepackage{algorithmic}
\usepackage{hyperref}
\usepackage{tikz}
\usepackage{amsmath}
\usepackage{amssymb}
\usepackage{amsthm}
\usepackage{multicol}
\usepackage{thmtools,thm-restate}
\usepackage{nameref}
\usepackage{array}
\usepackage{paralist}
\usepackage{placeins}
\usetikzlibrary{shapes,snakes}

% fix for packages hyperref and algorithmic

\newcolumntype{C}[1]{>{\centering\arraybackslash}m{#1}}

% version
\usepackage[accepted]{icml2017}

\global\long\def\E{\mathrm{E}}
\global\long\def\Var{\mathrm{Var}}
\global\long\def\KL{\mathrm{KL}}

\newtheorem{proposition}{Proposition}

\newtheorem{corollary}[proposition]{Corollary}

\numberwithin{equation}{section}

\icmltitlerunning{Adversarial Variational Bayes}

\begin{document}
\twocolumn[
\icmltitle{Adversarial Variational Bayes:\\Unifying Variational Autoencoders and Generative Adversarial Networks}
\begin{icmlauthorlist}
\icmlauthor{Lars Mescheder}{avg}\qquad
\icmlauthor{Sebastian Nowozin}{msr}\qquad
\icmlauthor{Andreas Geiger}{avg,cvg}
\end{icmlauthorlist}
\icmlaffiliation{avg}{Autonomous Vision Group, MPI T\"ubingen}
\icmlaffiliation{msr}{Microsoft Research Cambridge}
\icmlaffiliation{cvg}{Computer Vision and Geometry Group, ETH Z\"urich}
\icmlcorrespondingauthor{Lars Mescheder}{lars.mescheder@tuebingen.mpg.de}

\icmlkeywords{machine learning, GAN, VAE, generative models}
\vskip 0.3in
]

\printAffiliationsAndNotice{}

\begin{abstract}
Variational Autoencoders (VAEs) are expressive latent variable models that can be used to learn complex
probability distributions from training data. However, the quality of the resulting model
crucially relies on the expressiveness of the inference model.
We introduce Adversarial Variational Bayes
(AVB), a technique for training Variational Autoencoders 
with arbitrarily expressive inference models.
We achieve this by introducing an auxiliary discriminative network that
allows to rephrase the maximum-likelihood-problem as a two-player game, hence establishing 
a principled connection between VAEs and Generative Adversarial Networks (GANs). 
We show that in the nonparametric limit our method yields an
\emph{exact} maximum-likelihood assignment for the parameters of the
generative model, as well as the \emph{exact} posterior distribution
over the latent variables given an observation. Contrary to competing
approaches which combine VAEs with GANs, our approach has a clear theoretical justification, retains
most advantages of standard Variational Autoencoders and is easy to
implement.
\end{abstract}
\section{Introduction}
Generative models in machine learning are models that can be trained
on an unlabeled dataset and are capable of generating new data points
after training is completed. As generating new content requires a
good understanding of the training data at hand, such models are often regarded
as a key ingredient to unsupervised learning.
\begin{figure}[t!]

\input{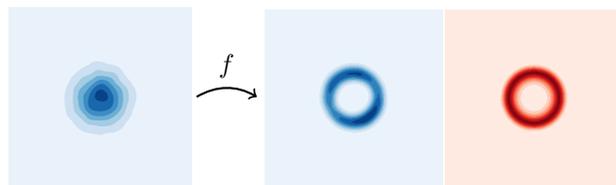}

\vspace{-0.8cm}
\caption{\label{fig:donut}
We propose a method which enables neural samplers with intractable density for Variational Bayes and as inference models for learning latent variable models. This toy example demonstrates our method's ability to accurately approximate complex posterior distributions like the one shown on the right.
}
\vspace{-0.5cm}
\end{figure}

In recent years, generative models have become more and more powerful. While many model classes such as PixelRNNs \cite{oord2016pixel},
PixelCNNs \cite{van2016conditional}, real NVP \cite{dinh2016density}
and Plug \& Play generative networks \cite{nguyen2016plug} have been
introduced and studied, the two most prominent ones
are Variational Autoencoders (VAEs) \cite{kingma2013auto,rezende2014stochastic}
and Generative Adversarial Networks (GANs) \cite{goodfellow2014generative}.

Both VAEs and GANs come with their own advantages and disadvantages: while GANs generally
yield visually sharper results when applied to learning a representation of natural images, VAEs are attractive because they naturally yield both a generative model and an inference model.
Moreover, it was reported, that VAEs often lead to better log-likelihoods \cite{wu2016quantitative}. The recently introduced BiGANs \cite{donahue2016adversarial,dumoulin2016adversarially} add an inference model to GANs. However, it was observed that the reconstruction results often only vaguely resemble the input
and often do so only semantically and not in terms of pixel values.

The failure of VAEs to generate sharp images is often attributed to the fact that the inference models used during training are usually not expressive enough to
capture the true posterior distribution.
Indeed, recent work shows that using more expressive model classes can lead to substantially better results \cite{kingma2016improving}, both visually
and in terms of log-likelihood bounds. Recent work \cite{chen2016variational} also suggests that highly expressive inference models are
essential in presence of a strong decoder to allow the model to make use of  the latent space at all. 

In this paper, we present Adversarial Variational Bayes (AVB)
\footnote{
Concurrently to our work, several researchers have described similar ideas.
Some ideas of this paper were described independently by Husz\'ar in a blog post on \url{http://www.inference.vc} and in \citet{huszar2017variational}. The idea to use adversarial training to improve the encoder network was also suggested by Goodfellow in an exploratory talk he gave at NIPS 2016 and by \citet{li2016wild}.
A similar idea was also mentioned by \citet{karaletsos2016adversarial} in the context of message passing in graphical models.
}
, a technique for training
Variational Autoencoders with arbitrarily flexible inference models parameterized by neural networks.
We can show that in the nonparametric limit we obtain a maximum-likelihood assignment for the generative model together
with the correct posterior distribution. 

While there were some attempts at combining VAEs and GANs \cite{makhzani2015adversarial,larsen2015autoencoding},
most of these attempts are not motivated from a maximum-likelihood point
of view and therefore usually do not lead to maximum-likelihood
assignments. For example, in Adversarial Autoencoders (AAEs) \cite{makhzani2015adversarial} the Kullback-Leibler regularization term that appears in the training 
objective for VAEs is replaced
with an adversarial loss that encourages the aggregated posterior to be close to the prior over the latent variables. Even though AAEs 
do \emph{not} maximize a lower bound to the maximum-likelihood 
objective, we show in Section \ref{sec:related-aae} that
AAEs can be interpreted as an approximation to our approach, thereby establishing a connection of AAEs to maximum-likelihood learning.

Outside the context of generative models, AVB yields a new method for performing Variational Bayes (VB) with neural samplers. 
This is illustrated in Figure \ref{fig:donut}, where we used AVB to train a neural network  to sample from a non-trival unnormalized probability density.
This allows to accurately approximate the posterior distribution of a probabilistic model, e.g. for Bayesian parameter estimation.
The only other variational methods we are aware of that can deal with such expressive inference models are based on Stein Discrepancy \cite{ranganath2016operator, liu2016two}. However, those methods usually do not directly target the reverse Kullback-Leibler-Divergence and can therefore
not be used to approximate the variational lower bound for learning a latent variable model.

Our contributions are as follows:
\begin{compactitem}
 \item We enable the usage of arbitrarily complex inference models for Variational Autoencoders using adversarial training.
 \item We give theoretical insights into our method, showing that in the nonparametric limit our method recovers the true posterior distribution as well as a true maximum-likelihood assignment for the parameters of the 
       generative model.
 \item We empirically demonstrate that our model is able to learn rich posterior distributions and show that the model is able to generate compelling samples for complex data sets.
\end{compactitem}

\section{Background}

\begin{figure}
\centering
\begin{subfigure}[b]{0.45\linewidth}
\centering
\resizebox{\hsize}{!}{
 \begin{tikzpicture}[scale=0.75]

\tikzstyle{edge}=[ultra thick]
\tikzstyle{vertex}=[circle,fill=white,draw, minimum width=1.1cm]
\tikzstyle{function}=[draw,minimum width=1.1cm]

\draw (-4, -4) edge[dashed] (4, -4) node[above right]{Encoder};
\draw (-4, -10) edge[dashed] (4, -10) node[above right]{Decoder};

 \node[vertex] (x)  at (0, 1) {$x$} ;
 \node[function] (f)  at (0, -1) {$f$} ;
 \node[vertex] (eps)  at (2,  -1) {$\epsilon_1$} ;
 \node[function] (plus1)  at (0, -2.5) {$+/*$} ;
 \node[vertex] (z)  at (0, -4) {$z$} ;

 \node[function] (g)  at (0, -6) {$g$} ;
 \node[vertex] (eps2)  at (2, -6) {$\epsilon_2$} ;
 \node[function] (plus2)  at (0, -7.5) {$+/*$} ;
 \node[vertex] (x2)  at (0, -9) {$x$} ;

\draw (x) edge [edge, ->] (f);
\draw (eps) edge [edge, ->] (plus1);
\draw (f) edge [edge, ->] (plus1);
\draw (plus1) edge [edge, ->] (z);

\draw (z) edge[edge, ->] (g);
\draw (g) edge[edge, ->] (plus2);
\draw (eps2) edge[edge, ->] (plus2);
\draw (plus2) edge[edge, ->] (x2);

\end{tikzpicture}
}
\caption{Standard VAE}
\label{fig:inference-comparison-standard}
\end{subfigure}
\begin{subfigure}[b]{0.45\linewidth}
\centering
\resizebox{\hsize}{!}{
 \begin{tikzpicture}[scale=0.75]

\tikzstyle{edge}=[ultra thick]
\tikzstyle{vertex}=[circle,fill=white,draw, minimum width=1.1cm]
\tikzstyle{function}=[draw,minimum width=1.1cm]

\draw (-4, -4) edge[dashed] (4, -4) node[above right]{Encoder};
\draw (-4, -10) edge[dashed] (4, -10) node[above right]{Decoder};

 \node[vertex] (x)  at (-1, 0) {$x$} ;
 \node[vertex] (eps)  at (1, 0) {$\epsilon_1$} ;
 \node[function] (f)  at (0, -2) {$f$} ;
 \node[vertex] (z)  at (0, -4) {$z$} ;
 \node[function] (g)  at (0, -6) {$g$} ;
 \node[vertex] (eps2)  at (2, -6) {$\epsilon_2$} ;
 \node[function] (plus)  at (0, -7.5) {$+/*$} ;
 \node[vertex] (x2)  at (0, -9) {$x$} ;

\draw (eps) edge [edge, ->] (f);
\draw (x) edge [edge, ->] (f);
\draw (f) edge [edge, ->] (z);
\draw (z) edge[edge, ->] (g);
\draw (z) edge[edge, ->] (g);
\draw (g) edge[edge, ->] (plus);
\draw (eps2) edge[edge, ->] (plus);
\draw (plus) edge[edge, ->] (x2);

\end{tikzpicture}
}
\caption{Our model}
\label{fig:inference-comparison-blackbox}
\end{subfigure}
\caption{Schematic comparison of a standard VAE and a VAE with black-box inference
model, where $\epsilon_1$ and $\epsilon_2$ denote samples from some noise distribution. While more complicated inference models for Variational Autoencoders
are possible, they are usually not as flexible as our black-box approach.}
\label{fig:inference-comparison}
\vspace{-0.4cm}
\end{figure}
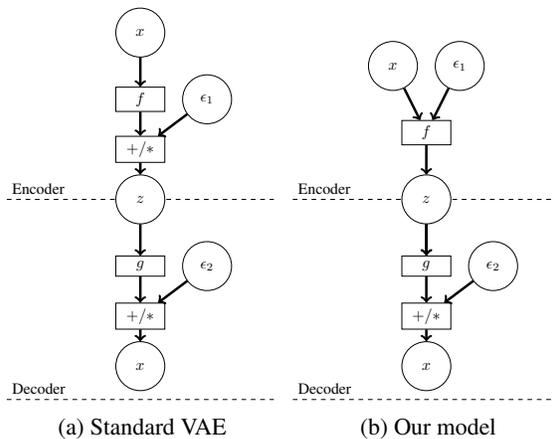

As our model is an extension of Variational Autoencoders (VAEs) \cite{kingma2013auto,rezende2014stochastic}, we start with a brief review of VAEs.

VAEs
are specified by a parametric generative model $p_{\theta}(x\mid z)$
of the visible variables given the latent variables, a prior $p(z)$
over the latent variables and an approximate inference model $q_{\phi}(z\mid x)$
over the latent variables given the visible variables. It can be shown
that
\begin{multline}\label{eq:variational-equation}
\log p_\theta(x)\geq-\KL(q_{\phi}(z\mid x),p(z))\\
+\E_{q_{\phi}(z\mid x)}\log p_{\theta}(x\mid z).
\end{multline}
The right hand side of \eqref{eq:variational-equation} is called the variational lower bound or 
evidence lower bound (ELBO).  
If there is $\phi$ such that $q_{\phi}(z\mid x)=p_{\theta}(z\mid x)$,
we would have
\begin{multline}\label{eq:variational-equation-max-eq}
\log p_\theta(x)=\max_{\phi}-\KL(q_{\phi}(z\mid x),p(z))\\
+\E_{q_{\phi}(z\mid x)}\log p_{\theta}(x\mid z).
\end{multline}
However, in general this is not true, so that we only have an inequality
in \eqref{eq:variational-equation-max-eq}.

When performing maximum-likelihood training, our goal is to optimize
the marginal log-likelihood
\begin{equation}\label{eq:MLE-problem}
 \E_{p_\mathcal D(x)} \log p_\theta (x), 
\end{equation}
where $p_{\mathcal D}$ is the data distribution.
Unfortunately, computing  $\log p_\theta (x)$ requires marginalizing out $z$ in $p_\theta(x,z)$
which is usually intractable. Variational Bayes uses inequality \eqref{eq:variational-equation} to 
rephrase the intractable problem of optimizing \eqref{eq:MLE-problem} into
\begin{multline}\label{eq:VAE-objective}
 \max_\theta \max_\phi \E_{p_{\mathcal D}(x)}\Bigl[
 -\KL(q_{\phi}(z\mid x),p(z)) \\
+ \E_{q_{\phi}(z\mid x)}\log p_{\theta}(x\mid z)
\Bigr].
\end{multline}
Due to inequality \eqref{eq:variational-equation}, we still optimize a lower bound to the true maximum-likelihood objective
\eqref{eq:MLE-problem}.

Naturally, the quality of this lower bound depends on the expressiveness of the inference model $q_\phi(z\mid x)$. Usually, 
$q_\phi(z\mid x)$ is taken to be a Gaussian distribution with diagonal covariance matrix whose mean and variance vectors are parameterized 
by neural networks with $x$ as input \cite{kingma2013auto, rezende2014stochastic}. While this model is very flexible in its dependence on $x$, its dependence on $z$ is very restrictive, potentially
limiting the quality of the resulting generative model. Indeed, it was observed that applying standard Variational Autoencoders to natural images
often results in blurry images \cite{larsen2015autoencoding}.

\section{Method}\label{sec:avb}
In this work we show how we can instead use a black-box inference model
$q_{\phi}(z\mid x)$ and use adversarial training to obtain an approximate maximum
likelihood assignment $\theta^{*}$ to $\theta$ and a close approximation $q_{\phi^*}(z\mid x)$
to the true posterior $p_{\theta^{*}}(z\mid x)$. This is visualized in Figure \ref{fig:inference-comparison}: on the left hand side the
structure of a typical VAE is shown. The right hand side shows our flexible black-box inference model. In contrast to a VAE with Gaussian inference model, we include the noise $\epsilon_1$ as additional input to the inference model instead of adding it at the very end, thereby allowing the inference network to learn complex probability distributions.

\subsection{Derivation}\label{sec:avb-derivation}
To derive our method, we rewrite the optimization problem in \eqref{eq:VAE-objective}
as
\begin{multline}\label{eq:variational-equation-reformulation}
 \max_\theta \max_\phi \E_{p_\mathcal D(x)} \E_{q_{\phi}(z\mid x)}\big(\log p(z) \\
-\log q_{\phi}(z\mid x)+\log p_{\theta}(x\mid z)\big).
\end{multline}
When we have an explicit representation of $q_\phi(z \mid x)$ such as a Gaussian parameterized by a neural network, \eqref{eq:variational-equation-reformulation} 
can be optimized using the reparameterization trick \cite{kingma2013auto,rezende2015variational} and stochastic gradient descent.
Unfortunately, this is not the case when we define $q_\phi(z \mid x)$ by a black-box procedure as illustrated in
Figure \ref{fig:inference-comparison-blackbox}.

The idea of our approach is to circumvent this problem by implicitly representing the term
\begin{equation}\label{eq:variational-equation-intractable-part}
  \log p(z) - \log q_\phi(z \mid x)
\end{equation}
as the optimal value of an additional real-valued discriminative network $T(x, z)$ that we introduce to the problem.

More specifically, consider the following objective for the discriminator $T(x, z)$ for a given $q_\phi(x\mid z)$: 
\begin{multline}\label{eq:discriminator-obj}
\max_{T}
\E_{p_{\mathcal{D}}(x)}\E_{q_{\phi}(z\mid x)} \log\sigma(T(x,z)) \\
+ \E_{p_{\mathcal{D}}(x)}\E_{p(z)} \log\left(1-\sigma(T(x,z))\right).
\end{multline}
Here, $\sigma(t) := (1 + \mathrm e^{-t})^{-1}$ denotes the sigmoid-function.
Intuitively, $T(x, z)$ tries to distinguish pairs $(x,z)$ that were sampled independently 
using the distribution $p_{\mathcal{D}}(x)p(z)$ from those that were
sampled using the current inference model, i.e., using $p_{\mathcal{D}}(x)q_{\phi}(z\mid x)$.

To simplify the theoretical analysis, we assume that the model $T(x,z)$ is flexible enough to
represent any function of the two variables $x$ and $z$. This assumption is often referred to as the nonparametric limit \cite{goodfellow2014generative}
and is justified by the fact
that deep neural networks are universal function approximators \cite{hornik1989multilayer}.

As it turns out, the optimal discriminator $T^*(x,z)$ according to the objective  in \eqref{eq:discriminator-obj} is given by the negative of
\eqref{eq:variational-equation-intractable-part}.
\begin{restatable}{proposition}{propoptimaldiscriminator}
\label{prop:optimal-discriminator}
 For $p_\theta(x \mid z)$ and $q_\phi(z \mid x)$ fixed, the optimal discriminator $T^*$ according to the objective
 in \eqref{eq:discriminator-obj} is given by
 \begin{equation}\label{eq:optimal-discriminator}
  T^*(x, z) = \log q_\phi(z \mid x) - \log p(z).
 \end{equation}
\end{restatable}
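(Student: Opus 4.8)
The plan is to run the standard pointwise-optimization argument used in the analysis of GANs \cite{goodfellow2014generative}. First I would rewrite the two expectations in \eqref{eq:discriminator-obj} as a single double integral over $(x,z)$,
\begin{multline*}
\int p_{\mathcal D}(x)\int\Bigl(q_\phi(z\mid x)\log\sigma(T(x,z))\\
+ p(z)\log\bigl(1-\sigma(T(x,z))\bigr)\Bigr)\,dz\,dx,
\end{multline*}
and observe that, because in the nonparametric limit $T(x,z)$ is an unconstrained function of the pair $(x,z)$, the maximization over $T$ decouples: it suffices to maximize the integrand separately for each fixed $(x,z)$. The common prefactor $p_{\mathcal D}(x)$ does not affect the location of the maximizer wherever $p_{\mathcal D}(x)>0$, and where $p_{\mathcal D}(x)=0$ the value of $T^*$ is immaterial.

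Second, I would carry out the one-dimensional calculus. Fixing $x,z$, write $a:=q_\phi(z\mid x)$, $b:=p(z)$, $t:=T(x,z)$, and set $h(t):=a\log\sigma(t)+b\log(1-\sigma(t))$. Using $\sigma'(t)=\sigma(t)(1-\sigma(t))$ one gets $h'(t)=a(1-\sigma(t))-b\,\sigma(t)=a-(a+b)\sigma(t)$, which vanishes exactly when $\sigma(t)=a/(a+b)$, i.e. when $t=\log a-\log b$; moreover $h''(t)=-(a+b)\,\sigma(t)(1-\sigma(t))<0$, so $h$ is strictly concave and this critical point is its unique global maximizer. Substituting back $a=q_\phi(z\mid x)$ and $b=p(z)$ yields $T^*(x,z)=\log q_\phi(z\mid x)-\log p(z)$, which is exactly \eqref{eq:optimal-discriminator}.

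Finally I would dispose of the degenerate cases: the expression $\log q_\phi(z\mid x)-\log p(z)$ is only meaningful where both densities are positive, so the statement should be read as holding for $(x,z)$ with $p_{\mathcal D}(x)q_\phi(z\mid x)>0$ (equivalently, almost everywhere with respect to $p_{\mathcal D}(x)p(z)$ under the usual shared-support assumption); where only one of $a,b$ is nonzero the supremum of $h$ is approached as $t\to\pm\infty$ and is attained only in the limit, the familiar caveat in results of this type. The genuinely delicate point is not the calculus but the justification of the nonparametric limit itself — namely that optimizing $T$ over a sufficiently expressive neural-network family is well approximated by optimizing over all measurable functions, and that the interchange of the $\max_T$ with the integral is legitimate; I would treat this the way \citet{goodfellow2014generative} do, appealing to universal approximation \cite{hornik1989multilayer} and leaving the measure-theoretic bookkeeping implicit, as the paper does.
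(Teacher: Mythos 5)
Your proposal is correct and follows essentially the same route as the paper's own proof: rewrite the two expectations as a single integral, optimize the integrand pointwise in $(x,z)$, and solve the resulting one-dimensional concave problem (the paper parameterizes by $\sigma(T)$ and cites the maximizer of $t\mapsto a\log t + b\log(1-t)$ at $t=a/(a+b)$, while you differentiate in $T$ directly --- an immaterial difference). Your added remarks on degenerate supports and the legitimacy of the nonparametric limit are sensible caveats the paper leaves implicit, but they do not change the argument.
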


\begin{proof}
The proof is analogous to the proof of Proposition 1 in \citet{goodfellow2014generative}. See the Supplementary Material 

for details.
\end{proof}
\vspace{-0.4cm}
Together with \eqref{eq:variational-equation-reformulation}, Proposition \ref{prop:optimal-discriminator} allows us to write the optimization objective in
\eqref{eq:VAE-objective} as
\begin{equation}\label{eq:avb-objective}
\max_{\theta, \phi}\E_{p_{\mathcal{D}}(x)}\E_{q_{\phi}(z\mid x)}\big(-T^*(x, z)
 + \log p_{\theta}(x\mid z)\big),
\end{equation}
where $T^*(x, z)$ is defined as the function that maximizes \eqref{eq:discriminator-obj}.

To optimize \eqref{eq:avb-objective}, we need to calculate the gradients of \eqref{eq:avb-objective} with respect to $\theta$ and $\phi$. 
While taking the gradient with respect to $\theta$ is straightforward, taking the gradient with respect to $\phi$ is complicated by the
fact that we have defined $T^*(x, z)$ indirectly as the solution of an auxiliary optimization problem which itself depends on $\phi$. However, the following Proposition shows that taking
the gradient with respect to the explicit occurrence of $\phi$ in $T^*(x, z)$ is not necessary:
\begin{restatable}{proposition}{propgradientdiscriminatorzero}\label{prop:gradient-discriminator-zero}
We have
\begin{equation}\label{eq:gradient-discriminator-zero}
 \E_{q_{\phi}(z\mid x)}\left(\nabla_\phi T^*(x, z)\right) = 0 .
\end{equation}
\end{restatable}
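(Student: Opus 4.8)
The plan is to use the explicit formula for the optimal discriminator from Proposition~\ref{prop:optimal-discriminator} and reduce the claim to a standard identity about the gradient of a probability density integrating to zero. From \eqref{eq:optimal-discriminator} we have $T^*(x,z) = \log q_\phi(z\mid x) - \log p(z)$, and only the first term depends on $\phi$. Hence
\[
  \nabla_\phi T^*(x,z) = \nabla_\phi \log q_\phi(z\mid x) = \frac{\nabla_\phi q_\phi(z\mid x)}{q_\phi(z\mid x)}.
\]
Multiplying by $q_\phi(z\mid x)$ and integrating over $z$ gives $\E_{q_\phi(z\mid x)}\left(\nabla_\phi T^*(x,z)\right) = \int \nabla_\phi q_\phi(z\mid x)\,dz = \nabla_\phi \int q_\phi(z\mid x)\,dz = \nabla_\phi 1 = 0$, which is exactly \eqref{eq:gradient-discriminator-zero}.

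First I would state that $T^*$ is well-defined (for fixed $x$, up to the $q_\phi(z\mid x)$-null set that does not affect the expectation) via Proposition~\ref{prop:optimal-discriminator}, so that the left-hand side of \eqref{eq:gradient-discriminator-zero} makes sense. Next I would substitute the closed form and observe the cancellation of the $\log p(z)$ term. Then I would carry out the one-line score-function computation above, with the interchange of $\nabla_\phi$ and $\int\,dz$ as the only nontrivial analytic step; I would note that this is the usual regularity assumption under which the reparameterization/score-function estimators are valid, namely that $q_\phi(z\mid x)$ is differentiable in $\phi$ with domination allowing differentiation under the integral sign.

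The main obstacle — really the only one — is the interchange of differentiation and integration, i.e.\ justifying $\int \nabla_\phi q_\phi(z\mid x)\,dz = \nabla_\phi \int q_\phi(z\mid x)\,dz$. In the nonparametric spirit of the paper this is taken as a mild regularity condition on the parametric family $q_\phi$, so I would simply invoke it rather than prove a dominated-convergence bound from scratch. A subtlety worth a sentence: the identity $\nabla_\phi T^*(x,z) = \nabla_\phi \log q_\phi(z\mid x)$ uses that the $\phi$-dependence enters $T^*$ only through $q_\phi$; since $T^*$ is characterized pointwise by the optimality condition in the proof of Proposition~\ref{prop:optimal-discriminator}, differentiating that closed-form expression is legitimate, and there is no hidden implicit dependence to worry about.
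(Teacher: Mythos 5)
Your proof is correct and follows essentially the same route as the paper's: substitute the closed form $T^*(x,z)=\log q_\phi(z\mid x)-\log p(z)$ from Proposition~\ref{prop:optimal-discriminator}, note that $\log p(z)$ drops out under $\nabla_\phi$, and apply the score-function identity $\E_{q_\phi}\left(\nabla_\phi \log q_\phi\right)=\nabla_\phi\int q_\phi\,\mathrm{d}z=0$. Your added remarks on differentiating under the integral sign and on the absence of hidden $\phi$-dependence in $T^*$ are reasonable clarifications the paper leaves implicit.
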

\begin{proof}
 The proof can be found in the Supplementary Material.
\end{proof}

\vspace{-0.4cm}
Using the reparameterization trick \cite{kingma2013auto, rezende2014stochastic}, \eqref{eq:avb-objective}
can be rewritten in the form
\begin{multline}\label{eq:avb-objective-reparameterization}
\max_{\theta, \phi}\E_{p_{\mathcal{D}}(x)} \E_{\epsilon}\big(-T^*(x, z_{\phi}(x,\epsilon)) \\
 + \log p_{\theta}(x\mid z_{\phi}(x,\epsilon))\big)
\end{multline}
for a suitable function $z_{\phi}(x,\epsilon)$. Together with Proposition \ref{prop:optimal-discriminator}, \eqref{eq:avb-objective-reparameterization}
allows us to take unbiased estimates of the gradients of \eqref{eq:avb-objective} with respect to $\phi$ and $\theta$.

\subsection{Algorithm}\label{sec:avb-algorithm}
\begin{algorithm}[t!]
\caption{Adversarial Variational Bayes (AVB)}
\label{alg:avb}
\begin{algorithmic}[1]
    \STATE $i \gets 0$
   \WHILE{not converged}
     \STATE Sample $\{x^{(1)}, \dots, x^{(m)}\}$ from data distrib. $p_\mathcal D(x)$
     \STATE Sample $\{z^{(1)}, \dots, z^{(m)}\}$ from prior $p(z)$
     \STATE Sample $\{\epsilon^{(1)}, \dots, \epsilon^{(m)}\}$ from $\mathcal N(0, 1)$
     
     \STATE Compute $\theta$-gradient (eq. \ref{eq:avb-objective-reparameterization}):\\
     \vspace{1ex}$
     g_\theta \gets  \frac{1}{m}  \sum_{k=1}^m \nabla_\theta
     \log p_\theta \left(x^{(k)} \mid z_\phi\left(x^{(k)}, \epsilon^{(k)}\right) \right)
     $\vspace{1ex}
     
     \STATE Compute $\phi$-gradient (eq. \ref{eq:avb-objective-reparameterization}):\\
     \vspace{1ex}$
     g_\phi \gets  \frac{1}{m} \sum_{k=1}^m 
     \nabla_\phi \bigl[
     -T_\psi \left(x^{(k)}, z_\phi(x^{(k)}, \epsilon^{(k)})\right)$\\
     \hspace{3cm}
     $ + \log p_\theta \left(x^{(k)} \mid z_\phi(x^{(k)}, \epsilon^{(k)}) \right)\bigr]$
%      \vspace{1ex}

     \STATE Compute $\psi$-gradient  (eq. \ref{eq:discriminator-obj}) :\\
     \vspace{1ex}$
     g_\psi \gets  \frac{1}{m} \sum_{k=1}^m 
     \nabla_\psi \Bigl[
     \log \left(\sigma (T_\psi(x^{(k)}, z_\phi(x^{(k)}, \epsilon^{(k)})))\right)$\\
     \hspace{3cm}
     $+ \log \left(1 - \sigma(T_\psi(x^{(k)}, z^{(k)}) \right)
     \Bigr]$
     \vspace{1ex}
     
     \STATE Perform SGD-updates for $\theta$, $\phi$ and $\psi$:\\
     $\theta \gets \theta + h_i\,g_\theta, \quad
     \phi \gets \phi + h_i\,g_\phi, \quad
     \psi \gets \psi + h_i\,g_\psi$
     \STATE $i \gets i+1$
   \ENDWHILE
\end{algorithmic}
\end{algorithm}

In theory, Propositions \ref{prop:optimal-discriminator} and \ref{prop:gradient-discriminator-zero} allow
us to apply Stochastic Gradient Descent (SGD) directly to the objective in \eqref{eq:VAE-objective}.
However, this requires keeping $T^*(x,z)$ optimal which is computationally challenging. We therefore regard the
optimization problems in \eqref{eq:discriminator-obj} and \eqref{eq:avb-objective-reparameterization}
as a two-player game. Propositions \ref{prop:optimal-discriminator} and \ref{prop:gradient-discriminator-zero} show that any Nash-equilibrium 
of this game yields a stationary point of the objective in \eqref{eq:VAE-objective}.

In practice, we try to find a Nash-equilibrium by applying SGD with step sizes $h_i$ jointly to 
\eqref{eq:discriminator-obj} and \eqref{eq:avb-objective-reparameterization}, see Algorithm \ref{alg:avb}. Here, we parameterize the neural network $T$ with 
a vector $\psi$. Even though we have no guarantees that this
algorithm converges, any fix point of this algorithm yields a stationary point of the objective in \eqref{eq:VAE-objective}.

Note that optimizing \eqref{eq:avb-objective} with respect to $\phi$ while keeping $\theta$ and $T$ fixed makes the 
encoder network collapse to a deterministic function. This is also a common problem for regular GANs \cite{radford2015unsupervised}. It is therefore crucial to
keep the discriminative $T$ network close to optimality while optimizing \eqref{eq:avb-objective}. A variant of Algorithm \ref{alg:avb} therefore performs
several SGD-updates for the adversary for one SGD-update of the generative model.
However, throughout our experiments we use the simple $1$-step version of AVB unless stated otherwise.

\subsection{Theoretical results}\label{sec:theory}

In Sections \ref{sec:avb-derivation} we derived AVB as a way of performing stochastic gradient descent on the
variational lower bound in \eqref{eq:VAE-objective}. In this section, we analyze the properties of Algorithm \ref{alg:avb} from a game theoretical point of view.

As the next proposition shows, global Nash-equilibria of Algorithm \ref{alg:avb} yield global optima  of the objective in \eqref{eq:VAE-objective}:
\begin{restatable}{proposition}{probnashequilibriumoptimal}\label{prop:nash-equilibrium-optimal}
 Assume that $T$ can represent
  any function of two variables. If $(\theta^*, \phi^*, T^*)$ defines a Nash-equilibrium of the two-player game defined by \eqref{eq:discriminator-obj} and \eqref{eq:avb-objective-reparameterization},

 then
 \begin{equation}\label{eq:nonparameteric-thm-optimal-T}
    T^*(x, z) = \log q_{\phi^*}(z \mid x) - \log p(z)
 \end{equation}
 and $(\theta^*, \phi^*)$ is a global optimum of the variational lower bound in \eqref{eq:VAE-objective}.
\end{restatable}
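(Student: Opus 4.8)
The plan is to use the two defining conditions of a Nash equilibrium one at a time. Since the discriminator objective \eqref{eq:discriminator-obj} does not involve $\theta$ at all, the equilibrium condition on the $T$-player says exactly that $T^*$ maximizes \eqref{eq:discriminator-obj} for the fixed inference model $q_{\phi^*}$. Proposition \ref{prop:optimal-discriminator} then immediately yields $T^*(x,z) = \log q_{\phi^*}(z\mid x) - \log p(z)$, which is \eqref{eq:nonparameteric-thm-optimal-T}. (If the maximizer is not pinned down off the relevant support, one reads this as holding $p_{\mathcal D}(x)q_{\phi^*}(z\mid x)$- and $p_{\mathcal D}(x)p(z)$-almost everywhere, which is all that is used below.)

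Next I would substitute this $T^*$ into the generator objective \eqref{eq:avb-objective-reparameterization} and undo the reparameterization to return to an expectation over $q_\phi(z\mid x)$, as in \eqref{eq:variational-equation-reformulation}. Because $T^*$ is frozen at its $\phi^*$-value, the quantity that the $(\theta,\phi)$-player maximizes at equilibrium is
$$\tilde L(\theta,\phi) := \E_{p_{\mathcal D}(x)}\E_{q_{\phi}(z\mid x)}\bigl(\log p(z) - \log q_{\phi^*}(z\mid x) + \log p_\theta(x\mid z)\bigr),$$
whereas the true variational lower bound of \eqref{eq:VAE-objective} is
$$L(\theta,\phi) := \E_{p_{\mathcal D}(x)}\E_{q_{\phi}(z\mid x)}\bigl(\log p(z) - \log q_{\phi}(z\mid x) + \log p_\theta(x\mid z)\bigr).$$
Adding and subtracting $\log q_\phi(z\mid x)$ gives the key identity $\tilde L(\theta,\phi) = L(\theta,\phi) + \E_{p_{\mathcal D}(x)}\KL\bigl(q_\phi(z\mid x), q_{\phi^*}(z\mid x)\bigr)$.

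Finally I would chain inequalities. The equilibrium condition on the $(\theta,\phi)$-player gives $\tilde L(\theta^*,\phi^*) \ge \tilde L(\theta,\phi)$ for all $(\theta,\phi)$. Evaluating the identity at $\phi=\phi^*$ kills the $\KL$ term, so $\tilde L(\theta^*,\phi^*) = L(\theta^*,\phi^*)$; and nonnegativity of the $\KL$ divergence gives $\tilde L(\theta,\phi) \ge L(\theta,\phi)$ for every $(\theta,\phi)$. Combining, $L(\theta^*,\phi^*) = \tilde L(\theta^*,\phi^*) \ge \tilde L(\theta,\phi) \ge L(\theta,\phi)$, so $(\theta^*,\phi^*)$ is a global maximizer of the variational lower bound. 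The only real care needed is bookkeeping: keeping the frozen discriminator at $\phi^*$ rather than the running $\phi$, handling the reparameterization change of variables cleanly, and invoking the almost-everywhere caveat from Proposition \ref{prop:optimal-discriminator} so that substituting $T^*$ into the expectations is legitimate. I do not expect a deeper obstacle; the argument is essentially the ``add and subtract a $\KL$'' device familiar from the original GAN analysis.
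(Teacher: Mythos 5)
Your proposal is correct and follows essentially the same route as the paper's proof: apply Proposition \ref{prop:optimal-discriminator} to pin down $T^*$, substitute it into the objective, and use the identity $\tilde L(\theta,\phi) = \mathcal L(\theta,\phi) + \E_{p_{\mathcal D}(x)}\KL\bigl(q_\phi(z\mid x), q_{\phi^*}(z\mid x)\bigr)$ together with the vanishing of the $\KL$ term at $\phi=\phi^*$. The only cosmetic difference is that you chain the inequalities directly while the paper phrases the final step as a proof by contradiction.
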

\begin{proof}
 The proof can be found in the Supplementary Material.
\end{proof}
\vspace{-0.4cm}
Our parameterization of  $q_\phi(z\mid x)$ as a neural network allows $q_\phi(z\mid x)$ to represent almost any
probability density on the latent space. This motivates

\begin{corollary}\label{cor:nonparameteric-optimality}
 Assume that $T$ can represent
  any function of two variables and $q_\phi(z\mid x)$ can represent any
  probability density on the latent space.
  If $(\theta^*, \phi^*, T^*)$ defines 
 a Nash-equilibrium for the game defined by \eqref{eq:discriminator-obj} and \eqref{eq:avb-objective-reparameterization}, then
 \begin{enumerate}
  \item\label{enum:nonparameteric-optimality-cond-mle} $\theta^*$ is a maximum-likelihood assignment
  \item\label{enum:nonparameteric-optimality-cond-posterior} $q_{\phi^*}(z \mid x)$ is equal to the true posterior  $p_{\theta^*}(z \mid x)$
  \item\label{enum:nonparameteric-optimality-cond-T} $T^*$ is the pointwise mutual information between $x$ and $z$, i.e.
  \begin{equation}\label{eq:nonparameteric-optimal-T}
    T^*(x, z)=\log\frac{p_{\theta^*}(x,z)}{p_{\theta^*}(x)p(z)}.
  \end{equation}
 \end{enumerate}
\end{corollary}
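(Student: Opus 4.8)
The plan is to reduce everything to the standard decomposition of the evidence lower bound together with Proposition~\ref{prop:nash-equilibrium-optimal}. First I would record the identity that, for every $\theta$ and $\phi$,
\begin{multline*}
-\KL(q_\phi(z\mid x),p(z)) + \E_{q_\phi(z\mid x)}\log p_\theta(x\mid z) \\
= \log p_\theta(x) - \KL\bigl(q_\phi(z\mid x),p_\theta(z\mid x)\bigr),
\end{multline*}
which one obtains by expanding both $\KL$ terms and using $p_\theta(x,z) = p(z)\,p_\theta(x\mid z) = p_\theta(x)\,p_\theta(z\mid x)$. Plugging this into \eqref{eq:VAE-objective} rewrites the variational lower bound as $\max_{\theta,\phi}\,\E_{p_{\mathcal D}(x)}\bigl[\log p_\theta(x) - \KL(q_\phi(z\mid x),p_\theta(z\mid x))\bigr]$.

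Next I would invoke the new hypothesis that $q_\phi$ can realize any conditional density on the latent space: for each fixed $\theta$ one can pick $\phi$ with $q_\phi(z\mid x)=p_\theta(z\mid x)$, killing the posterior $\KL$ term, so the value of the rewritten lower bound equals $\max_\theta\,\E_{p_{\mathcal D}(x)}\log p_\theta(x)$, i.e.\ exactly the maximum-likelihood objective \eqref{eq:MLE-problem}. By Proposition~\ref{prop:nash-equilibrium-optimal}, $(\theta^*,\phi^*)$ attains the maximum of the lower bound, hence
\begin{multline*}
\E_{p_{\mathcal D}(x)}\bigl[\log p_{\theta^*}(x) - \KL(q_{\phi^*}(z\mid x),p_{\theta^*}(z\mid x))\bigr] \\
= \max_\theta\,\E_{p_{\mathcal D}(x)}\log p_\theta(x).
\end{multline*}
Since $\KL\ge 0$, the left side is at most $\E_{p_{\mathcal D}(x)}\log p_{\theta^*}(x)$, which in turn is at most $\max_\theta\E_{p_{\mathcal D}(x)}\log p_\theta(x)$; the displayed equality forces both bounds to be tight. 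Tightness of the second gives claim~\ref{enum:nonparameteric-optimality-cond-mle}, that $\theta^*$ is a maximum-likelihood assignment. Tightness of the first forces $\E_{p_{\mathcal D}(x)}\KL(q_{\phi^*}(z\mid x),p_{\theta^*}(z\mid x))=0$, and as the integrand is nonnegative it must vanish for $p_{\mathcal D}$-almost every $x$, yielding claim~\ref{enum:nonparameteric-optimality-cond-posterior}, $q_{\phi^*}(z\mid x)=p_{\theta^*}(z\mid x)$. For claim~\ref{enum:nonparameteric-optimality-cond-T} I would substitute this equality into the formula $T^*(x,z)=\log q_{\phi^*}(z\mid x)-\log p(z)$ from Proposition~\ref{prop:nash-equilibrium-optimal} and rewrite $\log p_{\theta^*}(z\mid x) - \log p(z) = \log\frac{p_{\theta^*}(x,z)}{p_{\theta^*}(x)p(z)}$, which is the pointwise mutual information.

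The ELBO identity and the inequality chain are routine; the only place needing care is the measure-theoretic step, where the conclusions about $q_{\phi^*}$ and $T^*$ hold for $x$ in the support of $p_{\mathcal D}$ and, in this nonparametric idealization, only up to null sets. I would also be deliberate about stating the expressiveness hypothesis precisely --- that \emph{every} conditional density on the latent space is some $q_\phi$ --- since the whole reduction rests on being able to drive the posterior $\KL$ term to zero at the same $\theta$ that maximizes the likelihood.
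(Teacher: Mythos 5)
Your argument is correct and is essentially the paper's own proof, just written out in full: the paper simply asserts that, under the expressiveness assumption on $q_\phi$, optimizing the variational lower bound is equivalent to claims~\ref{enum:nonparameteric-optimality-cond-mle} and~\ref{enum:nonparameteric-optimality-cond-posterior} holding, and your ELBO decomposition plus the tightness-of-inequalities argument is precisely the justification of that equivalence, followed by the same substitution into \eqref{eq:nonparameteric-thm-optimal-T} for claim~\ref{enum:nonparameteric-optimality-cond-T}.
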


\begin{proof}
 This is a straightforward consequence of Proposition \ref{prop:nash-equilibrium-optimal}, as in this case $(\theta^*, \phi^*)$
 optimizes the variational lower bound in \eqref{eq:VAE-objective} if and only if \ref{enum:nonparameteric-optimality-cond-mle} and
 \ref{enum:nonparameteric-optimality-cond-posterior} hold. Inserting the result from \ref{enum:nonparameteric-optimality-cond-posterior} into \eqref{eq:nonparameteric-thm-optimal-T}
 yields \ref{enum:nonparameteric-optimality-cond-T}.
 \end{proof}

\section{Adaptive Contrast}\label{sec:adaptive-contrast}

While in the nonparametric limit our method yields the correct results, in practice $T(x, z)$ may fail to
become sufficiently close to the optimal function $T^*(x,z)$. The reason for this problem is that AVB calculates a contrast between the
two densities $p_{\mathcal D}(x)q_\phi(z\mid x)$ to $p_{\mathcal D}(x) p(z)$ which are usually very different.
However, it is known that logistic
regression works best for likelihood-ratio estimation when comparing two very similar densities \cite{friedman2001elements}.

To improve the quality of the estimate, we therefore propose to introduce an auxiliary conditional probability distribution
$r_\alpha(z \mid x)$ with known density that approximates $q_\phi(z \mid x)$. For example, $r_\alpha(z \mid x)$ could be a Gaussian distribution
with diagonal covariance matrix whose mean and variance matches the mean and variance of $q_\phi(z\mid x)$.

Using this auxiliary distribution, we can rewrite the variational lower bound in \eqref{eq:VAE-objective} as
\begin{multline}\label{eq:VAE-objective-local-norm}
 \E_{p_{\mathcal D}(x)} \Bigl[
 -\KL\left(q_\phi(z \mid x), r_\alpha(z \mid x)\right) \\
 + \E_{q_\phi(z \mid x)} \left(-\log r_\alpha(z \mid x) + \log p_\phi(x , z)\right)
 \Bigr].
\end{multline}
As we know the density of $r_\alpha(z\mid x)$, the second term in \eqref{eq:VAE-objective-local-norm} is amenable to stochastic gradient descent with respect to $\theta$ and $\phi$.
However, we can estimate the first term using AVB as described in Section \ref{sec:avb}. 
If $r_\alpha(z \mid x)$ approximates $q_\phi(z\mid x)$ well, 
$\KL\left(q_\phi(z \mid x), r_\alpha(z \mid x)\right)$
is usually much smaller than
$\KL\left(q_\phi(z \mid x), p(z)\right)$, which makes it easier for the adversary to learn the correct probability ratio.

We call this technique Adaptive Contrast (AC),
as we are now contrasting the current inference model $q_\phi(z \mid x)$ to an adaptive distribution $r_\alpha(z \mid x)$ instead of the prior $p(z)$. 
Using Adaptive Contrast, the generative model $p_\theta(x \mid z)$ and the inference model $q_\phi(z \mid x)$ are trained to maximize
\begin{multline}
 \E_{p_{\mathcal{D}}(x)}\E_{q_{\phi}(z\mid x)}\big(-T^*(x, z) \\
 - \log r_\alpha(z\mid x)
 + \log p_{\theta}(x, z)\big),
\end{multline}
where $T^*(x,z)$ is the optimal discriminator distinguishing samples from $r_\alpha(z\mid x)$ and $q_\phi(z\mid x)$.

Consider now the case that $r_\alpha(z \mid x)$ is given by a Gaussian distribution
with diagonal covariance matrix whose mean $\mu(x)$ and variance vector $\sigma(x)$ match the mean and variance of $q_\phi(z\mid x)$.
As the Kullback-Leibler divergence is invariant under reparameterization, the first term in \eqref{eq:VAE-objective-local-norm} can be rewritten as
\begin{equation}\label{eq:VAE-objective-local-norm-Gauss-T}
 \E_{p_{\mathcal D}(x)} \KL\left(\tilde q_\phi(\tilde z \mid x), r_0(\tilde z)\right)
\end{equation}
where $\tilde q_\phi(\tilde z \mid x)$ denotes the distribution of the normalized vector $\tilde z := \tfrac{z - \mu(x)}{\sigma (x)}$ and $r_0(\tilde z)$ is a Gaussian distribution with mean $0$ and
variance $1$.
This way, the adversary only has to account for the deviation of $q_\phi(z \mid x)$ from a Gaussian distribution, not its location and scale. Please see the Supplementary Material for pseudo code of the resulting algorithm.

 In practice, we estimate $\mu(x)$ and
$\sigma(x)$ using a Monte-Carlo estimate. In the Supplementary Material we describe a network architecture for $q_\phi(z\mid x)$ that makes the computation of this
estimate particularly efficient.

\section{Experiments}
We tested our method both as a black-box method for variational inference and for learning generative models.
The former application corresponds to the case where we fix the generative model and a data point $x$ and want to learn the posterior $q_\phi(z \mid x)$.

An additional experiment on the celebA dataset \cite{liu2015faceattributes} can be found in the Supplementary Material.
\subsection{Variational Inference}\label{sec:var-inference}
\begin{figure}
\centering{}
\includegraphics[width=0.8\linewidth]{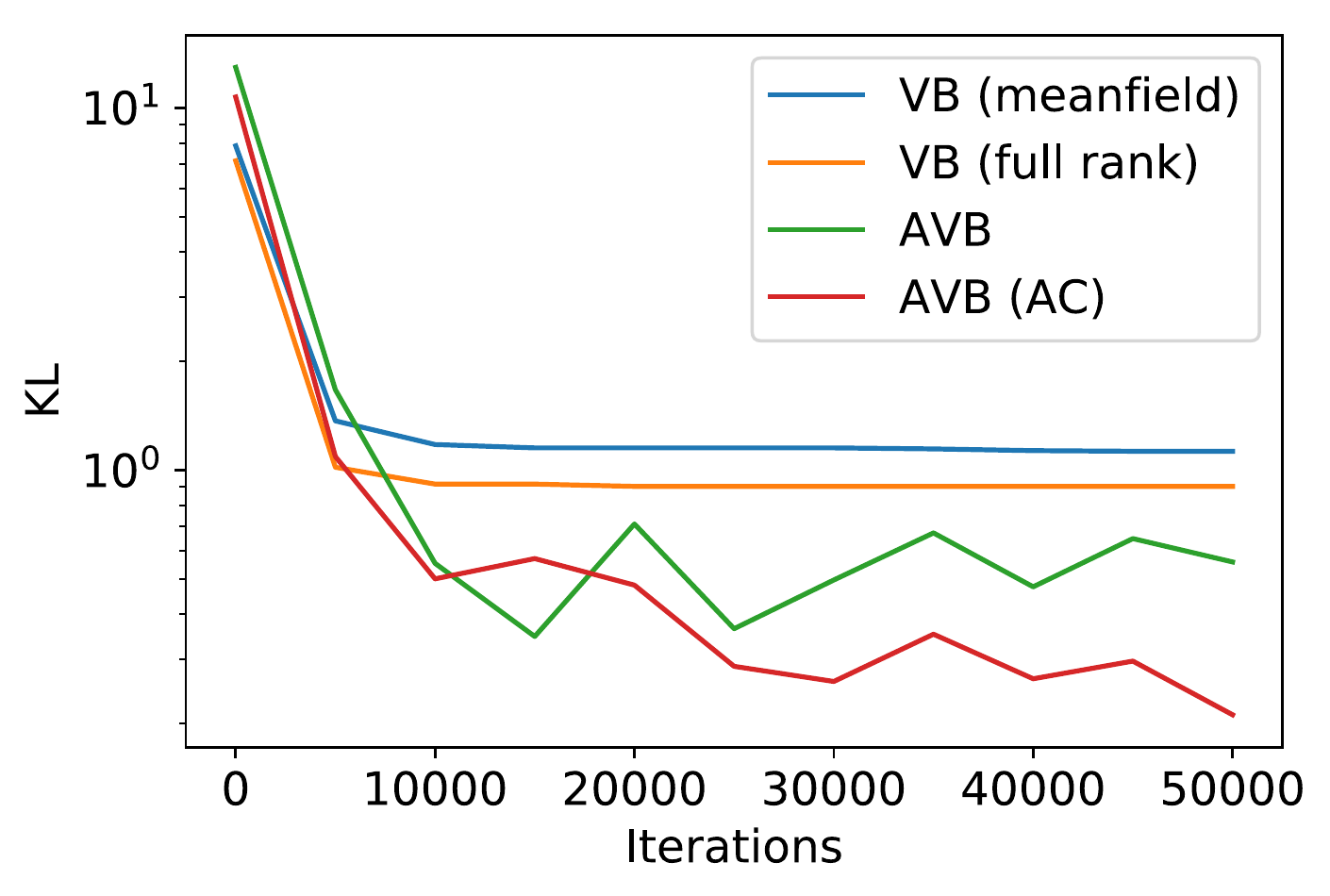}
 \vspace{-0.4cm}
\caption{\label{fig:pystan-comparison}Comparison of KL to ground truth posterior obtained by Hamiltonian Monte Carlo (HMC).}
 \vspace{-0.5cm}
\end{figure}

\begin{figure}[]
  \centering
  \begin{tabular}{C{2em}C{0.35\linewidth}C{0.35\linewidth}}
      & $(\mu, \tau)$ & $(\tau, \eta_1)$ \\
    AVB
    &\includegraphics[width=\linewidth]{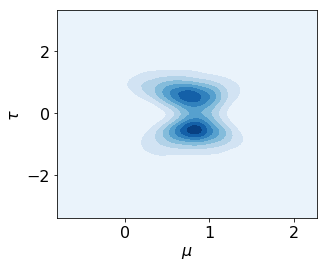}
    &\includegraphics[width=\linewidth]{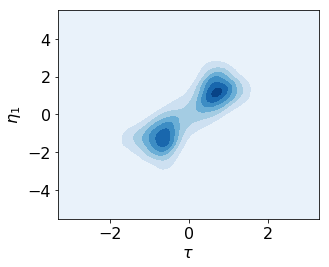}\\ 
    VB (fullrank)
    &\includegraphics[width=\linewidth]{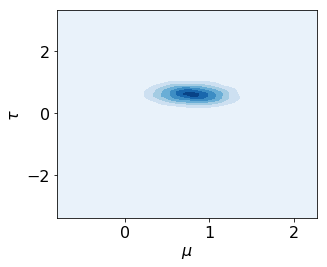}
    &\includegraphics[width=\linewidth]{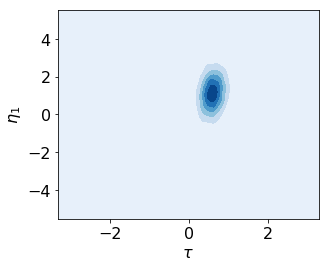}\\ 
    HMC
    &\includegraphics[width=\linewidth]{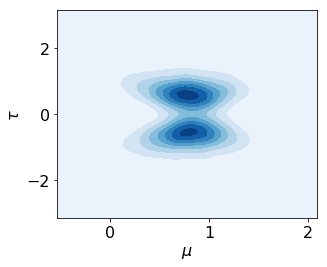}
    &\includegraphics[width=\linewidth]{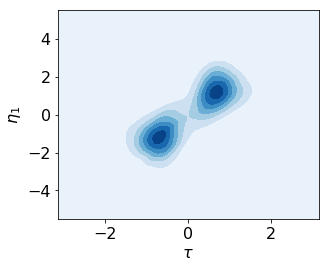}\\
   \end{tabular}
     \vspace{-0.2cm}

  \caption{\label{fig:pystan-marginals}Comparison of AVB to VB on the ``Eight Schools'' example by inspecting two marginal distributions of the approximation to the 10-dimensional posterior. We see that AVB accurately captures the multi-modality of the posterior distribution. In contrast, VB only focuses on a single mode. The ground truth is shown in the last row and has been obtained using HMC.}
  \vspace{-0.5cm}
\end{figure}
When the generative model and a data point $x$ is fixed, AVB gives a new technique for Variational Bayes with arbitrarily complex approximating distributions. We applied this to the ``Eight School'' example from
\citet{gelman2014bayesian}. In this example, the coaching effects $y_i$, $i=1,\dots,8$ for eight schools are modeled as
\begin{equation*}
 y_i \sim \mathcal N(\mu + \theta \cdot \eta_i, \sigma_i),
\end{equation*}
where $\mu$, $\tau$ and the $\eta_i$ are the model parameters to be inferred. We place a $\mathcal N(0, 1)$ prior on the parameters of the model.
We compare AVB against two variational methods with Gaussian inference model \cite{kucukelbir2015automatic} as implemented in STAN \cite{stan_development_team_stan_2016}. We used a simple two layer model for the posterior and a powerful $5$-layer network with RESNET-blocks \cite{he2015deep} for the discriminator. For every posterior update step we performed two steps for the adversary. The ground-truth data was obtained by running Hamiltonian Monte-Carlo (HMC) for 500000 steps using STAN.
Note that AVB and the baseline variational methods allow to draw an arbitrary number of samples after training is completed whereas HMC only yields a fixed number of samples. 

We evaluate all methods by estimating the Kullback-Leibler-Divergence to the ground-truth data using the ITE-package \cite{szabo2013information} applied
to $10000$ samples from the ground-truth data and the respective approximation.
The resulting Kullback-Leibler divergence over the number of iterations for the different methods is plotted in Figure \ref{fig:pystan-comparison}. We see that
our method clearly outperforms the methods with Gaussian inference model. For a qualitative visualization, we also applied Kernel-density-estimation to the 2-dimensional marginals of the $(\mu, \tau)$- and $(\tau, \eta_1)$-variables as illustrated in Figure \ref{fig:pystan-marginals}. In contrast to variational Bayes with Gaussian inference model, our approach clearly captures the multi-modality of the posterior distribution.
We also observed that Adaptive Contrast makes learning more robust and improves the quality of the resulting model.

\FloatBarrier
\subsection{Generative Models}
\paragraph{Synthetic Example}
\begin{figure}[t]
\centering
{\includegraphics[width=0.1\linewidth]{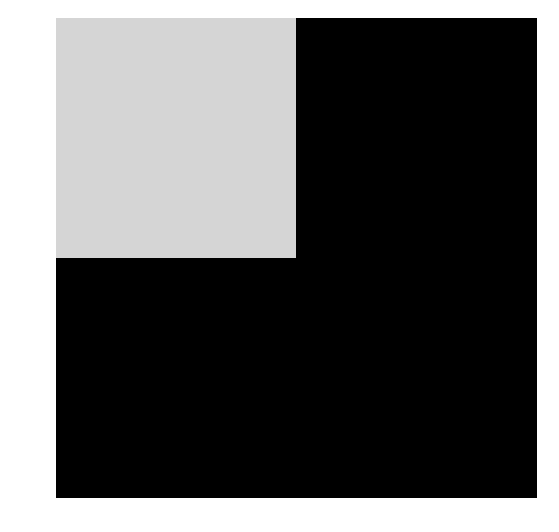}}
{\includegraphics[width=0.1\linewidth]{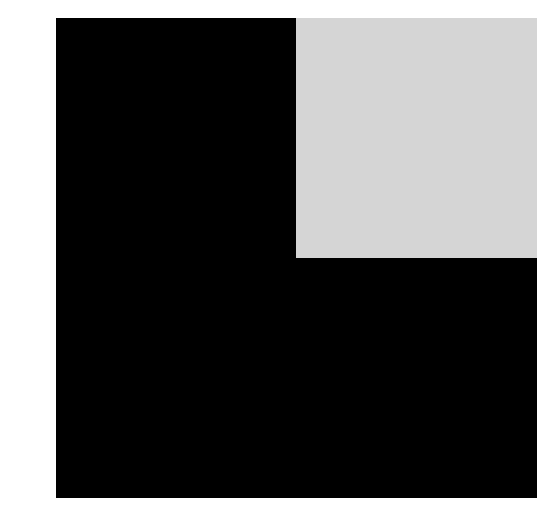}}
{\includegraphics[width=0.1\linewidth]{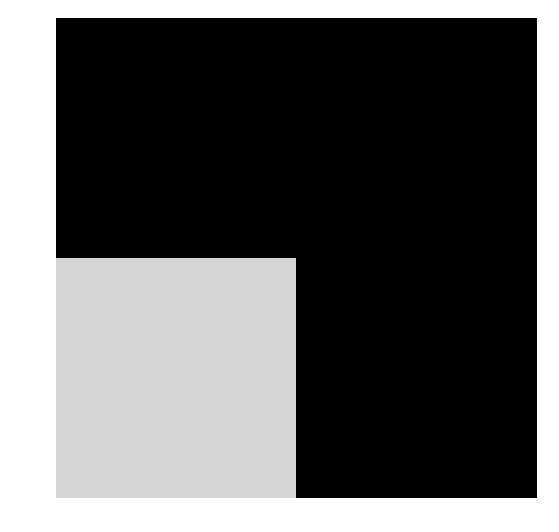}}
{\includegraphics[width=0.1\linewidth]{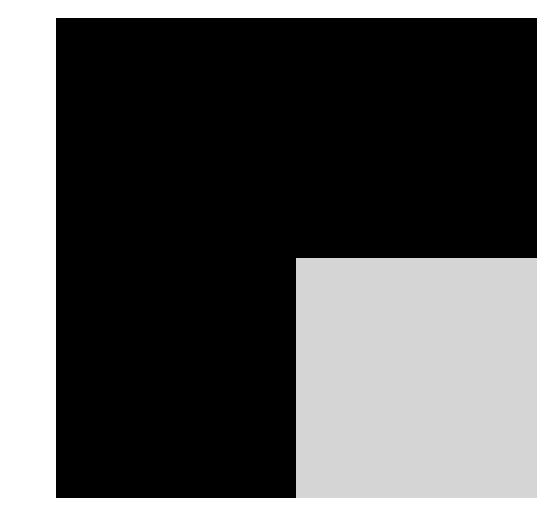}}
\caption{\label{fig:synthetic-samples}Training examples in the synthetic 
dataset.}
\vspace{0.5cm}
\centering
\begin{subfigure}[b]{0.35\linewidth}
\centering
\includegraphics[width=\linewidth]{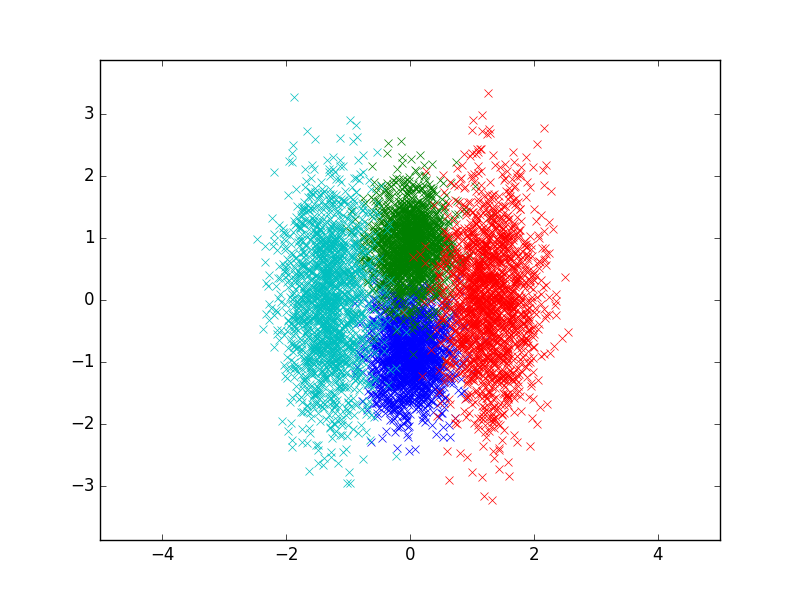}
\caption{VAE}
\end{subfigure}
\begin{subfigure}[b]{0.35\linewidth}
\centering
\includegraphics[width=\linewidth]{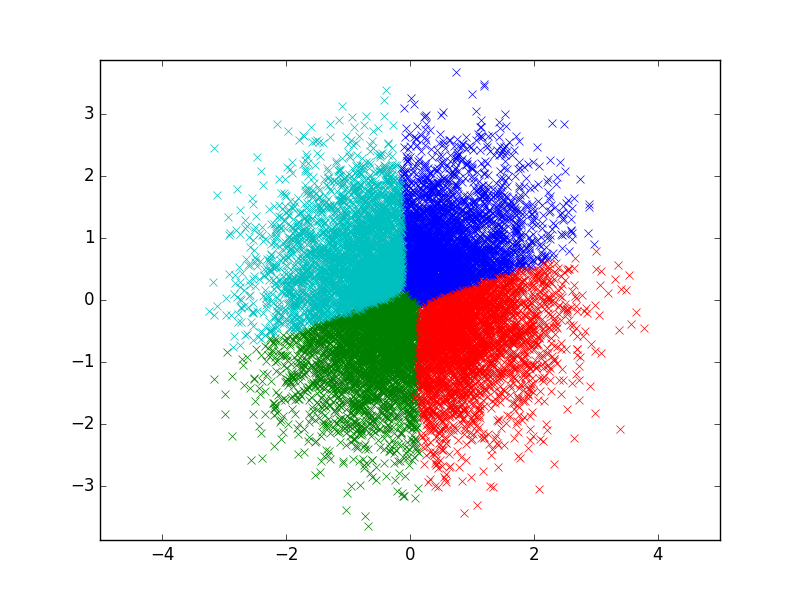}
\caption{AVB}
\end{subfigure}

\caption{\label{fig:latent}Distribution of latent code for VAE and AVB trained
on synthetic dataset.}
\vspace{-0.4cm}
\end{figure}

\begin{table}[t]
\centering
\resizebox{0.66\linewidth}{!}{
\begin{tabular}{ l | c c }
      & VAE &  AVB \\
  \hline 	
  log-likelihood & -1.568 & \bf -1.403 \\
  reconstruction error & 88.5   $\cdot 10^{-3}$ &  5.77 $\cdot 10^{-3}$ \\
  ELBO & -1.697 &  $\approx$ -1.421 \\
  $\KL(q_\phi(z), p(z))$ & $\approx$ 0.165  & $\approx$ \bf 0.026
\end{tabular}
}
\caption{\label{tab:latent}Comparison of VAE and AVB on synthetic dataset. The 
optimal log-likelihood score
on this dataset is $-\log(4) \approx -1.386$.}
\vspace{-0.4cm}
\end{table}

To illustrate the application of our method to 
learning a generative model, we trained the neural networks on a simple 
synthetic dataset containing only the
$4$ data points from the space of $2\times 2$ binary images shown in Figure \ref{fig:synthetic-samples} and a 
$2$-dimensional latent space.
Both the encoder and decoder are parameterized by 
2-layer fully 
connected neural networks with 512 hidden units each. The encoder network takes 
as input a data point $x$ and a vector of Gaussian random noise $\epsilon$ and 
produces
a latent code $z$. The decoder network takes as input a latent code $z$ and 
produces the parameters for four independent Bernoulli-distributions, one for 
each pixel of the output image.
The adversary is parameterized by two neural networks with two $512$-dimensional 
hidden layers each,
acting on $x$ and $z$ respectively, whose $512$-dimensional outputs are 
combined using an inner product.

We compare our method to a Variational Autoencoder with a diagonal Gaussian 
posterior distribution. The encoder and decoder networks are parameterized 
as above, but the encoder does not take the noise $\epsilon$ 
as input and produces a mean and variance vector instead of a single sample.

We visualize the 
resulting division of the latent space in Figure \ref{fig:latent}, where each 
color corresponds to 
one state in the $x$-space. Whereas the Variational Autoencoder divides the 
space
into a mixture of $4$ Gaussians, the Adversarial Variational Autoencoder learns 
a complex posterior distribution. Quantitatively this can be verified by 
computing
the KL-divergence between the prior $p(z)$ and the aggregated posterior 
$q_\phi(z):=\int q_\phi(z \mid x) p_\mathcal D(x) \mathrm d x$, which we estimate
using the ITE-package \cite{szabo2013information}, see Table \ref{tab:latent}.
Note that the variations for different colors in Figure \ref{fig:latent} are solely due to the noise 
$\epsilon$ used in the inference model.

The ability of AVB to learn more complex posterior models leads to improved 
performance as Table \ref{tab:latent} shows.
In particular, AVB leads to a higher likelihood score that is close to the 
optimal value of $-\log(4)$ compared to a standard VAE that struggles with the 
fact that it cannot divide the
latent space appropriately. Moreover, we see that the reconstruction error given 
by the mean cross-entropy between an input $x$ and its reconstruction using the 
encoder and decoder networks is much lower when using AVB instead of a VAE with 
diagonal Gaussian inference model. 
We also observe that the estimated variational lower bound is close to the true 
log-likelihood, indicating that the adversary has learned the correct function.

\paragraph{MNIST}
\begin{figure}[t]
\centering
\begin{subfigure}[b]{0.4\linewidth}
\centering
\includegraphics[width=\linewidth]{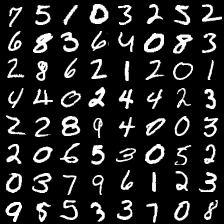}
\caption{Training data}
\end{subfigure}
\begin{subfigure}[b]{0.4\linewidth}
\centering
\includegraphics[width=\linewidth]{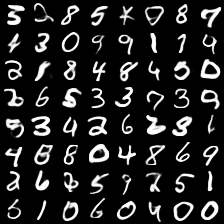}
\caption{Random samples}
\end{subfigure}
\caption{\label{fig:mnist-samples}Independent samples for a model trained
on MNIST}
\vspace{-0.4cm}
\end{figure}

\begin{table}[t]
\centering
\resizebox{\linewidth}{!}{
 \begin{tabular}{ l | c c r}
 
   & $\log p(x) \geq$ & $\log p(x) \approx$ &  \\
  \hline
  AVB (8-dim)  & $(\approx -83.6 \pm 0.4)$ & $-91.2 \pm 0.6$ & \\
  AVB + AC (8-dim)  & $\approx -96.3 \pm 0.4$ & $-89.6 \pm 0.6$ & \\
  AVB + AC (32-dim) & $\approx -79.5 \pm 0.3$   & $-80.2 \pm 0.4$ & \\
  VAE (8-dim) &  $-98.1 \pm 0.5$ & $-90.9 \pm 0.6$ & \\
  VAE (32-dim)&  $-87.2 \pm 0.3$ & $-81.9 \pm 0.4$ & \\
  \hline
  VAE + NF (T=80)  & $-85.1$ & $-$ & \small\cite{rezende2015variational} \\
  VAE + HVI  (T=16)  & $-88.3$ & $-85.5 $& \small\cite{salimans2015markov}\\
  convVAE + HVI (T=16)  & $-84.1$ & $-81.9$& \small\cite{salimans2015markov}\\
  VAE + VGP (2hl)& $-81.3$ & $-$& \small\cite{tran2015variational}\\
  DRAW + VGP & $-79.9$ &$-$ & \small\cite{tran2015variational} \\
  VAE + IAF & $-80.8$ & $-79.1$ & \small\cite{kingma2016improving} \\
  Auxiliary VAE (L=2) &$-83.0$ & $-$ & \small\cite{maaloe2016auxiliary}
\end{tabular}
}
\vspace{-0.2cm}

\caption{\label{tab:mnist}
Log-likelihoods on binarized MNIST for AVB and other methods improving on VAEs.
We see that our method achieves state of the art log-likelihoods on binarized MNIST. The approximate log-likelihoods in the lower half of the table were not obtained with AIS but with importance sampling.
}
\vspace{-0.5cm}
\end{table}

In addition, we trained deep convolutional networks based on the 
DC-GAN-architecture \cite{radford2015unsupervised} on the binarized
MNIST-dataset \cite{lecun1998gradient}. For the decoder network, we use a $5$-layer deep convolutional neural network. For the encoder network, we use a network architecture
that allows for the efficient computation of the moments of $q_\phi(z \mid x)$. 
The idea is to define the encoder as a linear combination of learned basis noise vectors, each parameterized by a small fully-connected neural network,
whose coefficients are parameterized by a neural network acting on $x$, please see the Supplementary Material for details.
For 
the adversary, we replace the fully connected neural network
acting on $z$ and $x$ with a fully connected $4$-layer neural networks with $1024$ units in each hidden layer.  In addition, we added the result of neural networks acting on $x$ and $z$ alone to the end result.

To validate our method, we ran Annealed Importance Sampling (AIS) \cite{neal2001annealed}, the gold standard for evaluating decoder based generative models \cite{wu2016quantitative} with $1000$ intermediate 
distributions and $5$ parallel chains on $2048$ test examples. The results are reported in Table \ref{tab:mnist}. 
Using AIS, we see that AVB without AC overestimates 
the true ELBO which degrades its performance.
Even though the results suggest that AVB with AC can also overestimate the true ELBO in higher dimensions,
we note that the log-likelihood estimate computed by AIS is also only a lower bound to the true log-likelihood \cite{wu2016quantitative}. 
 
Using AVB with AC, we see that we improve both on a standard VAE and AVB without AC.
When comparing to other state of the art methods, we see that our method achieves state of the art results on binarized MNIST\footnote{Note that the methods in the lower half of Table~\ref{tab:mnist} were trained with different decoder architectures and therefore only provide limited information regarding the quality of the inference model.}. For an additional experimental evaluation of AVB and  three baselines for a fixed decoder architecture see the Supplementary Material. 
Some random samples for MNIST are shown in Figure \ref{fig:mnist-samples}. We 
see that our model produces random samples that are perceptually close to the training 
set.

\section{Related Work}

\subsection{Connection to Variational Autoencoders}
AVB strives to optimize the same objective as a standard VAE  \cite{kingma2013auto,rezende2014stochastic},
but approximates
the Kullback-Leibler divergence using an adversary instead of relying on a closed-form formula.

Substantial work has focused on making the class of approximate
inference models more expressive. Normalizing flows \cite{rezende2015variational, kingma2016improving} make the posterior more complex by composing a simple Gaussian posterior
with an invertible smooth mapping for which the determinant of the Jacobian is tractable. Auxiliary Variable VAEs \cite{maaloe2016auxiliary} add auxiliary variables to the posterior to
make it more flexible.
However, no other approach that we are aware of allows to use black-box
inference models to optimize the ELBO.

\subsection{Connection to Adversarial Autoencoders}\label{sec:related-aae}

Makhzani et al. \cite{makhzani2015adversarial} introduced the concept
of Adversarial Autoencoders. The idea is to replace the term
\begin{equation}
 \KL(q_\phi(z \mid x), p(z))
\end{equation}
in \eqref{eq:VAE-objective} with an adversarial loss that tries to enforce that upon convergence
\begin{equation}
 \int q_\phi(z \mid x) p_{\mathcal D}(x) \mathrm d x \approx p(z).
\end{equation}
While related to our approach, the
approach by Makhzani et al. modifies the variational objective while our approach retains
the objective. 

The approach by Makhzani et al. can
be regarded as an approximation to our approach, where $T(x,z)$ is
restricted to the class of functions that do not depend on $x$. Indeed,
an ideal discriminator that only depends on $z$ maximizes
\begin{multline}
\int\int p_{\mathcal{D}}(x)q(z\mid x)\log\sigma(T(z))\mathrm{d}x\mathrm{d}z\\
+\int\int p_{\mathcal{D}}(x)p(z)\log\left(1-\sigma(T(z)\right))\mathrm{d}x\mathrm{d}z
\end{multline}
which is the case if and only if
\begin{equation}
T(z)=\log\int q(z\mid x)p_{\mathcal{D}}(x)\mathrm{d}x-\log p(z).
\end{equation}
Clearly, this simplification is a crude approximation to our formulation from Section \ref{sec:avb}, but \citet{makhzani2015adversarial}
show that this method can still lead to good sampling results. In theory, restricting
$T(x,z)$ in this way ensures that upon convergence we approximately have
\begin{equation}
\int q_\phi(z\mid x)p_{\mathcal{D}}(x)\mathrm{d}x=p(z),
\end{equation}
but $q_\phi(z\mid x)$ need not be close to the true posterior $p_\theta(z\mid x)$.
Intuitively, while mapping $p_{\mathcal{D}}(x)$ through $q_\phi(z\mid x)$
results in the correct marginal distribution, the contribution of
each $x$ to this distribution can be very inaccurate.

In contrast to Adversarial Autoencoders, our goal is to improve the ELBO by performing better probabilistic inference. This allows our method to be used in a more general setting where we are only interested in the inference network itself (Section~\ref{sec:var-inference}) and enables further improvements such as Adaptive Contrast (Section~\ref{sec:adaptive-contrast}) which are not possible in the context of Adversarial Autoencoders. 

\subsection{Connection to f-GANs}

Nowozin et al. \cite{nowozin2016f} proposed to generalize Generative
Adversarial Networks \cite{goodfellow2014generative} to f-divergences
\cite{ali1966general} based on results by Nguyen et al. \cite{nguyen2010estimating}.
In this paragraph we show that f-divergences allow to represent
AVB as a zero-sum two-player game.

The family of f-divergences is given by
\begin{equation}
D_{f}(p\|q)
=\E_{p}f\left(\frac{q(x)}{p(x)}\right).
\end{equation}
for some convex functional $f:\mathbb{R}\to\mathbb{R}_{\infty}$ with
$f(1)=0$.

\citet{nguyen2010estimating} show that by using the convex conjugate $f^*$ of
$f$, \cite{hiriart2013convex}, we obtain
\begin{equation}
D_{f}(p\|q)
=  \sup_{T}\E_{q(x)}\left[T(x)\right] -  \E_{p(x)}\left[f^{*}(T(x))\right],
\end{equation}
where $T$ is a real-valued function. In particular, this is true
for the reverse Kullback-Leibler divergence with $f(t)=t\log t$.
We therefore obtain
\begin{multline}
\KL(q(z\mid x),p(z))=D_{f}(p(z),q(z\mid x))\\
=\sup_{T}\E_{q(z\mid x)}T(x,z)-\E_{p(z)}f^{*}(T(x,z)),
\end{multline}
with $f^{*}(\xi)=\exp(\xi-1)$ the convex conjugate of $f(t)=t\log t$.

All in all, this yields
\begin{eqnarray}
& & \max_{\theta} \E_{p_{\mathcal{D}}(x)}\log p_\theta(x)\label{eq:adversary-objective}\\
& = & \max_{\theta,q} \min_{T} \E_{p_{D}(x)}\E_{p(z)} f^{*}(T(x,z))\nonumber\\
& & \qquad +\E_{p_{D}(x)}\E_{q(z\mid x)}(\log p_{\theta}(x\mid
	z)-T(x,z)).\nonumber
\end{eqnarray}
By replacing the objective \eqref{eq:discriminator-obj} for the discriminator with
\begin{equation}
\min_{T}\E_{p_{D}(x)} \left[\E_{p(z)}\mathrm{e}^{T(x,z)-1}
-\E_{q(z\mid x)}T(x,z) \right],\label{eq:alternative-discriminator-obj}
\end{equation}
we can reformulate the maximum-likelihood-problem as a mini-max zero-sum game. In fact,
the derivations from Section \ref{sec:avb} remain valid for any $f$ -divergence that we use to train the discriminator. 
This is similar to the approach taken by Poole et al. \cite{poole2016improved}
to improve the GAN-objective. In practice, we observed that the objective
(\ref{eq:alternative-discriminator-obj}) results in unstable training.
We therefore used the standard GAN-objective (\ref{eq:discriminator-obj}), which corresponds to the Jensen-Shannon-divergence.

\subsection{Connection to BiGANs}
BiGANs \cite{donahue2016adversarial,dumoulin2016adversarially} are a recent extension to Generative Adversarial Networks with the goal
to add an inference network to the generative model. Similarly to our approach, the authors introduce an adversary that acts on pairs $(x, z)$ of data points and
latent codes. However, whereas in BiGANs the adversary is used to optimize the generative and inference networks separately, our approach optimizes
the generative and inference model jointly. As a result, our approach obtains good reconstructions of the input data, whereas for BiGANs
we obtain these reconstructions only indirectly.

\section{Conclusion}
We presented a new training procedure for Variational Autoencoders
based on adversarial training. This allows us to make the inference
model much more flexible, effectively allowing it to represent almost
any family of conditional distributions over the latent variables.

We believe that further progress can be made by investigating the
class of neural network architectures used for the adversary and the
encoder and decoder networks as well as finding better contrasting distributions. 

\section*{Acknowledgements}
This work was supported by Microsoft Research through its PhD Scholarship Programme.

\bibliographystyle{icml2017}
\bibliography{bib/bibliography}

\begin{thebibliography}{39}
\providecommand{\natexlab}[1]{#1}
\providecommand{\url}[1]{\texttt{#1}}
\expandafter\ifx\csname urlstyle\endcsname\relax
  \providecommand{\doi}[1]{doi: #1}\else
  \providecommand{\doi}{doi: \begingroup \urlstyle{rm}\Url}\fi

\bibitem[Ali \& Silvey(1966)Ali and Silvey]{ali1966general}
Ali, Syed~Mumtaz and Silvey, Samuel~D.
\newblock A general class of coefficients of divergence of one distribution
  from another.
\newblock \emph{Journal of the Royal Statistical Society. Series B
  (Methodological)}, pp.\  131--142, 1966.

\bibitem[Chen et~al.(2016)Chen, Kingma, Salimans, Duan, Dhariwal, Schulman,
  Sutskever, and Abbeel]{chen2016variational}
Chen, Xi, Kingma, Diederik~P, Salimans, Tim, Duan, Yan, Dhariwal, Prafulla,
  Schulman, John, Sutskever, Ilya, and Abbeel, Pieter.
\newblock Variational lossy autoencoder.
\newblock \emph{arXiv preprint arXiv:1611.02731}, 2016.

\bibitem[Dinh et~al.(2016)Dinh, Sohl-Dickstein, and Bengio]{dinh2016density}
Dinh, Laurent, Sohl-Dickstein, Jascha, and Bengio, Samy.
\newblock Density estimation using real nvp.
\newblock \emph{arXiv preprint arXiv:1605.08803}, 2016.

\bibitem[Donahue et~al.(2016)Donahue, Kr{\"a}henb{\"u}hl, and
  Darrell]{donahue2016adversarial}
Donahue, Jeff, Kr{\"a}henb{\"u}hl, Philipp, and Darrell, Trevor.
\newblock Adversarial feature learning.
\newblock \emph{arXiv preprint arXiv:1605.09782}, 2016.

\bibitem[Dumoulin et~al.(2016)Dumoulin, Belghazi, Poole, Lamb, Arjovsky,
  Mastropietro, and Courville]{dumoulin2016adversarially}
Dumoulin, Vincent, Belghazi, Ishmael, Poole, Ben, Lamb, Alex, Arjovsky, Martin,
  Mastropietro, Olivier, and Courville, Aaron.
\newblock Adversarially learned inference.
\newblock \emph{arXiv preprint arXiv:1606.00704}, 2016.

\bibitem[Friedman et~al.(2001)Friedman, Hastie, and
  Tibshirani]{friedman2001elements}
Friedman, Jerome, Hastie, Trevor, and Tibshirani, Robert.
\newblock \emph{The elements of statistical learning}, volume~1.
\newblock Springer series in statistics Springer, Berlin, 2001.

\bibitem[Gelman et~al.(2014)Gelman, Carlin, Stern, and
  Rubin]{gelman2014bayesian}
Gelman, Andrew, Carlin, John~B, Stern, Hal~S, and Rubin, Donald~B.
\newblock \emph{Bayesian data analysis}, volume~2.
\newblock Chapman \& Hall/CRC Boca Raton, FL, USA, 2014.

\bibitem[Goodfellow et~al.(2014)Goodfellow, Pouget-Abadie, Mirza, Xu,
  Warde-Farley, Ozair, Courville, and Bengio]{goodfellow2014generative}
Goodfellow, Ian, Pouget-Abadie, Jean, Mirza, Mehdi, Xu, Bing, Warde-Farley,
  David, Ozair, Sherjil, Courville, Aaron, and Bengio, Yoshua.
\newblock Generative adversarial nets.
\newblock In \emph{Advances in Neural Information Processing Systems}, pp.\
  2672--2680, 2014.

\bibitem[He et~al.(2015)He, Zhang, Ren, and Sun]{he2015deep}
He, Kaiming, Zhang, Xiangyu, Ren, Shaoqing, and Sun, Jian.
\newblock Deep residual learning for image recognition.
\newblock \emph{arXiv preprint arXiv:1512.03385}, 2015.

\bibitem[Hiriart-Urruty \& Lemar{\'e}chal(2013)Hiriart-Urruty and
  Lemar{\'e}chal]{hiriart2013convex}
Hiriart-Urruty, Jean-Baptiste and Lemar{\'e}chal, Claude.
\newblock \emph{Convex analysis and minimization algorithms I: fundamentals},
  volume 305.
\newblock Springer science \& business media, 2013.

\bibitem[Hornik et~al.(1989)Hornik, Stinchcombe, and
  White]{hornik1989multilayer}
Hornik, Kurt, Stinchcombe, Maxwell, and White, Halbert.
\newblock Multilayer feedforward networks are universal approximators.
\newblock \emph{Neural networks}, 2\penalty0 (5):\penalty0 359--366, 1989.

\bibitem[Husz{\'a}r(2017)]{huszar2017variational}
Husz{\'a}r, Ferenc.
\newblock Variational inference using implicit distributions.
\newblock \emph{arXiv preprint arXiv:1702.08235}, 2017.

\bibitem[Karaletsos(2016)]{karaletsos2016adversarial}
Karaletsos, Theofanis.
\newblock Adversarial message passing for graphical models.
\newblock \emph{arXiv preprint arXiv:1612.05048}, 2016.

\bibitem[Kingma \& Welling(2013)Kingma and Welling]{kingma2013auto}
Kingma, Diederik~P and Welling, Max.
\newblock Auto-encoding variational bayes.
\newblock \emph{arXiv preprint arXiv:1312.6114}, 2013.

\bibitem[Kingma et~al.(2016)Kingma, Salimans, and Welling]{kingma2016improving}
Kingma, Diederik~P, Salimans, Tim, and Welling, Max.
\newblock Improving variational inference with inverse autoregressive flow.
\newblock \emph{arXiv preprint arXiv:1606.04934}, 2016.

\bibitem[Kucukelbir et~al.(2015)Kucukelbir, Ranganath, Gelman, and
  Blei]{kucukelbir2015automatic}
Kucukelbir, Alp, Ranganath, Rajesh, Gelman, Andrew, and Blei, David.
\newblock Automatic variational inference in stan.
\newblock In \emph{Advances in neural information processing systems}, pp.\
  568--576, 2015.

\bibitem[Larsen et~al.(2015)Larsen, S{\o}nderby, and
  Winther]{larsen2015autoencoding}
Larsen, Anders Boesen~Lindbo, S{\o}nderby, S{\o}ren~Kaae, and Winther, Ole.
\newblock Autoencoding beyond pixels using a learned similarity metric.
\newblock \emph{arXiv preprint arXiv:1512.09300}, 2015.

\bibitem[LeCun et~al.(1998)LeCun, Bottou, Bengio, and
  Haffner]{lecun1998gradient}
LeCun, Yann, Bottou, L{\'e}on, Bengio, Yoshua, and Haffner, Patrick.
\newblock Gradient-based learning applied to document recognition.
\newblock \emph{Proceedings of the IEEE}, 86\penalty0 (11):\penalty0
  2278--2324, 1998.

\bibitem[Li \& Liu(2016)Li and Liu]{li2016wild}
Li, Yingzhen and Liu, Qiang.
\newblock Wild variational approximations.
\newblock In \emph{NIPS workshop on advances in approximate Bayesian
  inference}, 2016.

\bibitem[Liu \& Feng(2016)Liu and Feng]{liu2016two}
Liu, Qiang and Feng, Yihao.
\newblock Two methods for wild variational inference.
\newblock \emph{arXiv preprint arXiv:1612.00081}, 2016.

\bibitem[Liu et~al.(2015)Liu, Luo, Wang, and Tang]{liu2015faceattributes}
Liu, Ziwei, Luo, Ping, Wang, Xiaogang, and Tang, Xiaoou.
\newblock Deep learning face attributes in the wild.
\newblock In \emph{Proceedings of International Conference on Computer Vision
  (ICCV)}, 2015.

\bibitem[Maal{\o}e et~al.(2016)Maal{\o}e, S{\o}nderby, S{\o}nderby, and
  Winther]{maaloe2016auxiliary}
Maal{\o}e, Lars, S{\o}nderby, Casper~Kaae, S{\o}nderby, S{\o}ren~Kaae, and
  Winther, Ole.
\newblock Auxiliary deep generative models.
\newblock \emph{arXiv preprint arXiv:1602.05473}, 2016.

\bibitem[Makhzani et~al.(2015)Makhzani, Shlens, Jaitly, and
  Goodfellow]{makhzani2015adversarial}
Makhzani, Alireza, Shlens, Jonathon, Jaitly, Navdeep, and Goodfellow, Ian.
\newblock Adversarial autoencoders.
\newblock \emph{arXiv preprint arXiv:1511.05644}, 2015.

\bibitem[Neal(2001)]{neal2001annealed}
Neal, Radford~M.
\newblock Annealed importance sampling.
\newblock \emph{Statistics and Computing}, 11\penalty0 (2):\penalty0 125--139,
  2001.

\bibitem[Nguyen et~al.(2016)Nguyen, Yosinski, Bengio, Dosovitskiy, and
  Clune]{nguyen2016plug}
Nguyen, Anh, Yosinski, Jason, Bengio, Yoshua, Dosovitskiy, Alexey, and Clune,
  Jeff.
\newblock Plug \& play generative networks: Conditional iterative generation of
  images in latent space.
\newblock \emph{arXiv preprint arXiv:1612.00005}, 2016.

\bibitem[Nguyen et~al.(2010)Nguyen, Wainwright, and
  Jordan]{nguyen2010estimating}
Nguyen, XuanLong, Wainwright, Martin~J, and Jordan, Michael~I.
\newblock Estimating divergence functionals and the likelihood ratio by convex
  risk minimization.
\newblock \emph{IEEE Transactions on Information Theory}, 56\penalty0
  (11):\penalty0 5847--5861, 2010.

\bibitem[Nowozin et~al.(2016)Nowozin, Cseke, and Tomioka]{nowozin2016f}
Nowozin, Sebastian, Cseke, Botond, and Tomioka, Ryota.
\newblock f-gan: Training generative neural samplers using variational
  divergence minimization.
\newblock \emph{arXiv preprint arXiv:1606.00709}, 2016.

\bibitem[Poole et~al.(2016)Poole, Alemi, Sohl-Dickstein, and
  Angelova]{poole2016improved}
Poole, Ben, Alemi, Alexander~A, Sohl-Dickstein, Jascha, and Angelova, Anelia.
\newblock Improved generator objectives for gans.
\newblock \emph{arXiv preprint arXiv:1612.02780}, 2016.

\bibitem[Radford et~al.(2015)Radford, Metz, and
  Chintala]{radford2015unsupervised}
Radford, Alec, Metz, Luke, and Chintala, Soumith.
\newblock Unsupervised representation learning with deep convolutional
  generative adversarial networks.
\newblock \emph{arXiv preprint arXiv:1511.06434}, 2015.

\bibitem[Ranganath et~al.(2016)Ranganath, Tran, Altosaar, and
  Blei]{ranganath2016operator}
Ranganath, Rajesh, Tran, Dustin, Altosaar, Jaan, and Blei, David.
\newblock Operator variational inference.
\newblock In \emph{Advances in Neural Information Processing Systems}, pp.\
  496--504, 2016.

\bibitem[Rezende \& Mohamed(2015)Rezende and Mohamed]{rezende2015variational}
Rezende, Danilo~Jimenez and Mohamed, Shakir.
\newblock Variational inference with normalizing flows.
\newblock \emph{arXiv preprint arXiv:1505.05770}, 2015.

\bibitem[Rezende et~al.(2014)Rezende, Mohamed, and
  Wierstra]{rezende2014stochastic}
Rezende, Danilo~Jimenez, Mohamed, Shakir, and Wierstra, Daan.
\newblock Stochastic backpropagation and approximate inference in deep
  generative models.
\newblock \emph{arXiv preprint arXiv:1401.4082}, 2014.

\bibitem[Salimans et~al.(2015)Salimans, Kingma, Welling,
  et~al.]{salimans2015markov}
Salimans, Tim, Kingma, Diederik~P, Welling, Max, et~al.
\newblock Markov chain monte carlo and variational inference: Bridging the gap.
\newblock In \emph{ICML}, volume~37, pp.\  1218--1226, 2015.

\bibitem[{Stan Development Team}(2016)]{stan_development_team_stan_2016}
{Stan Development Team}.
\newblock Stan modeling language users guide and reference manual, {Version}
  2.14.0, 2016.
\newblock URL \url{http://mc-stan.org}.

\bibitem[Szabo(2013)]{szabo2013information}
Szabo, Zolt{\'a}n.
\newblock Information theoretical estimators (ite) toolbox.
\newblock 2013.

\bibitem[Tran et~al.(2015)Tran, Ranganath, and Blei]{tran2015variational}
Tran, Dustin, Ranganath, Rajesh, and Blei, David~M.
\newblock The variational gaussian process.
\newblock \emph{arXiv preprint arXiv:1511.06499}, 2015.

\bibitem[van~den Oord et~al.(2016{\natexlab{a}})van~den Oord, Kalchbrenner,
  Espeholt, Vinyals, Graves, et~al.]{van2016conditional}
van~den Oord, Aaron, Kalchbrenner, Nal, Espeholt, Lasse, Vinyals, Oriol,
  Graves, Alex, et~al.
\newblock Conditional image generation with pixelcnn decoders.
\newblock In \emph{Advances In Neural Information Processing Systems}, pp.\
  4790--4798, 2016{\natexlab{a}}.

\bibitem[van~den Oord et~al.(2016{\natexlab{b}})van~den Oord, Kalchbrenner, and
  Kavukcuoglu]{oord2016pixel}
van~den Oord, Aaron van~den, Kalchbrenner, Nal, and Kavukcuoglu, Koray.
\newblock Pixel recurrent neural networks.
\newblock \emph{arXiv preprint arXiv:1601.06759}, 2016{\natexlab{b}}.

\bibitem[Wu et~al.(2016)Wu, Burda, Salakhutdinov, and
  Grosse]{wu2016quantitative}
Wu, Yuhuai, Burda, Yuri, Salakhutdinov, Ruslan, and Grosse, Roger.
\newblock On the quantitative analysis of decoder-based generative models.
\newblock \emph{arXiv preprint arXiv:1611.04273}, 2016.

\end{thebibliography}

\twocolumn[
\icmltitle{Supplementary Material for \\ Adversarial Variational Bayes: Unifying Variational Autoencoders and Generative Adversarial Networks}
\begin{icmlauthorlist}
\icmlauthor{Lars Mescheder}{avg}\qquad
\icmlauthor{Sebastian Nowozin}{msr}\qquad
\icmlauthor{Andreas Geiger}{avg,cvg}
\end{icmlauthorlist}
\icmlaffiliation{avg}{Autonomous Vision Group, MPI T\"ubingen}
\icmlaffiliation{msr}{Microsoft Research Cambridge}
\icmlaffiliation{cvg}{Computer Vision and Geometry Group, ETH Z\"urich}
\icmlcorrespondingauthor{Lars Mescheder}{lars.mescheder@tuebingen.mpg.de}
\icmlkeywords{machine learning, GAN, VAE, generative models}
\vskip 0.3in
]
\renewcommand{\thesection}{\Roman{section}}
\setcounter{section}{0}
\begin{abstract}
 In the main text we derived Adversarial Variational Bayes (AVB) and
 demonstrated its usefulness both for black-box Variational Inference and
 for learning latent variable models. This document contains proofs that
 were omitted in the main text as well as some further details about the experiments
 and additional results.
\end{abstract}

\section{Proofs}
This section contains the proofs that were omitted in the main text.

The derivation of AVB in Section \ref{sec:avb-derivation} relies on the fact that we have an explicit representation of the optimal discriminator $T^*(x,z)$. This was stated in the 
following Proposition:

\propoptimaldiscriminator*

\begin{proof}
As in the proof of Proposition 1 in \citet{goodfellow2014generative}, we rewrite the objective in \eqref{eq:discriminator-obj} as
\begin{multline}
 \int \bigl( p_{\mathcal D}(x) q_\phi(z \mid x) \log \sigma (T(x, z)) \\
 +  p_{\mathcal D}(x) p(z) \log(1 - \sigma(T(x, z)) \bigr) \mathrm d x \mathrm d z.
\end{multline}
This integral is maximal as a function of $T(x,z)$ if and only if the integrand is maximal for every $(x,z)$. However, the function
\begin{equation}
 t \mapsto a \log(t) + b\log(1 - t) 
\end{equation}
attains its maximum at $t = \tfrac{a}{a + b}$, showing that
\begin{equation}
 \sigma( T^*(x, z))
 = \frac{q_\phi(z \mid x)}{q_\phi(z \mid x) + p(z)}
\end{equation}
or, equivalently,
\begin{equation}
  T^*(x,z) = \log q_\phi(z \mid x) - \log p(z).
\end{equation}
\end{proof}

To apply our method in practice, we need to obtain unbiased gradients of the ELBO. As it turns out, this can be achieved by taking the gradients w.r.t. a fixed optimal discriminator. This is a consequence of the following Proposition:
\propgradientdiscriminatorzero*
\begin{proof}
 By Proposition \ref{prop:optimal-discriminator},
 \begin{multline}\label{eq:gradient-discriminator}
   \E_{q_{\phi}(z\mid x)}\left(\nabla_\phi T^*(x, z)\right) \\
   = \E_{q_{\phi}(z\mid x)}\left(\nabla_\phi \log q_{\phi}(z\mid x) \right).
 \end{multline}
  For an arbitrary family of probability densities $q_\phi$ we have
  \begin{multline}
    \E_{q_\phi} \left( \nabla_\phi \log q_\phi \right)
    = \int q_\phi(z) \frac{\nabla_\phi q_\phi(z)}{q_\phi(z)} \mathrm d z \\
    = \nabla_\phi  \int q_\phi(z) \mathrm d z = \nabla_\phi  1 = 0.
  \end{multline}
  Together with \eqref{eq:gradient-discriminator}, this implies \eqref{eq:gradient-discriminator-zero}.
\end{proof}

In Section \ref{sec:theory} we characterized the Nash-equilibria of the two-player game defined by our algorithm. The following Proposition shows that in the nonparametric limit
for  $T(x,z)$ any Nash-equilibrium defines a global optimum of the variational lower bound:

\probnashequilibriumoptimal*
\begin{proof}
If $(\theta^*, \phi^*, T^*)$ defines a Nash-equilibrium, Proposition \ref{prop:optimal-discriminator} shows \eqref{eq:nonparameteric-thm-optimal-T}.
Inserting \eqref{eq:nonparameteric-thm-optimal-T} into \eqref{eq:avb-objective} shows that $(\phi^*, \theta^*)$ maximizes
\begin{multline}\label{eq:nonparameteric-thm-avb-objective}
 \E_{p_{\mathcal{D}}(x)}\E_{q_{\phi}(z\mid x)} \bigl(
  -\log q_{\phi^*}(z \mid x)  + \log p(z) \\
  + \log p_\theta(x \mid z)
  \bigr)
\end{multline}
as a function of $\phi$ and $\theta$. 
A straightforward calculation shows that \eqref{eq:nonparameteric-thm-avb-objective}
is equal to
\begin{equation}\label{eq:nonparameteric-thm-avb-objective-2}
 \mathcal L(\theta, \phi)
 +  \E_{p_{\mathcal{D}}(x)} \KL(q_\phi(z \mid x), q_{\phi^*}(z \mid x))
\end{equation}
where
\begin{multline}
 \mathcal L(\theta, \phi)
 :=  \E_{p_{\mathcal{D}}(x)} \Big[ -\KL(q_\phi(z \mid x), p(z)) \\
    + \E_{q_\phi(z \mid x)} \log p_\theta (x \mid z) \Big] 
\end{multline}
is the variational lower bound in \eqref{eq:VAE-objective}.

Notice that \eqref{eq:nonparameteric-thm-avb-objective-2} evaluates to $\mathcal L(\theta^*, \phi^*)$ when we insert  $(\theta^*, \phi^*)$ 
for $(\theta, \phi)$.

Assume now, that $(\theta^*, \phi^*)$ does not maximize the variational lower bound
$\mathcal L(\theta, \phi)$.
Then there is  $(\theta', \phi')$ with
\begin{equation}
 \mathcal L(\theta', \phi') > \mathcal  L(\theta^*, \phi^*).
\end{equation}

Inserting $(\theta', \phi')$ for
$(\theta, \phi)$ in \eqref{eq:nonparameteric-thm-avb-objective-2} we obtain
\begin{equation}
\mathcal L(\theta', \phi') +  \E_{p_{\mathcal{D}}(x)} \KL(q_{\phi'}(z \mid x), q_{\phi^*}(z \mid x)),
\end{equation}
which is strictly bigger than $\mathcal L(\theta^*, \phi^*)$, contradicting the fact that $(\theta^*, \phi^*)$ maximizes
\eqref{eq:nonparameteric-thm-avb-objective-2}.
Together with \eqref{eq:nonparameteric-thm-optimal-T}, this proves the theorem.
\end{proof}

\vfill
\section{Adaptive Contrast}
In Section~\ref{sec:adaptive-contrast} we derived a variant of AVB that contrasts the current inference model with an adaptive distribution rather than the prior. This leads to Algorithm~\ref{alg:avb-ac}. Note that we do not consider the $\mu^{(k)}$ and $\sigma^{(k)}$ to be functions of $\phi$ and therefore  do not backpropagate gradients through them.

\begin{algorithm}[H]
\caption{Adversarial Variational Bayes with Adaptive Constrast (AC)}
\label{alg:avb-ac}
\begin{algorithmic}[1]
    \STATE $i \gets 0$
   \WHILE{not converged}
     \STATE Sample $\{x^{(1)}, \dots, x^{(m)}\}$ from data distrib. $p_\mathcal D(x)$
     \STATE Sample $\{z^{(1)}, \dots, z^{(m)}\}$ from prior $p(z)$
     \STATE Sample $\{\epsilon^{(1)}, \dots, \epsilon^{(m)}\}$ from $\mathcal N(0, 1)$
     \STATE Sample $\{\eta^{(1)}, \dots, \eta^{(m)}\}$ from $\mathcal N(0, 1)$
     \FOR{$k = 1,\dots, m$}
       \STATE $z_\phi^{(k)}, \mu^{(k)}, \sigma^{(k)} \gets \mathrm{encoder}_\phi(x^{(k)}, \epsilon^{(k)})$
       \STATE $\bar z_\phi^{(k)} \gets \tfrac{z_\phi^{(k)} - \mu^{(k)}}{\sigma^{(k)}}$
     \ENDFOR
     \STATE Compute $\theta$-gradient (eq. \ref{eq:avb-objective-reparameterization}):\\
     \vspace{1ex}$
     g_\theta \gets \frac{1}{m}  \sum_{k=1}^m
     \nabla_\theta \log p_\theta \left(x^{(k)} , z_\phi^{(k)} \right)
     $\vspace{1ex}
     
     \STATE Compute $\phi$-gradient (eq. \ref{eq:avb-objective-reparameterization}):\\
     \vspace{1ex}$
     g_\phi \gets \frac{1}{m} \sum_{k=1}^m 
     \nabla_\phi \bigl[
     -T_\psi \left(x^{(k)}, \bar z_\phi^{(k)} \right) + \tfrac{1}{2}\|\bar z^{(k)}_\phi \|^2$\\
     \hspace{3cm}
     $ + \log p_\theta \left(x^{(k)} , z_\phi^{(k)} \right)\bigr]$

     \STATE Compute $\psi$-gradient  (eq. \ref{eq:discriminator-obj}) :\\
     \vspace{1ex}$
     g_\psi \gets \frac{1}{m} \sum_{k=1}^m 
     \nabla_\psi \Bigl[
     \log \left(\sigma (T_\psi(x^{(k)}, \bar z_\phi^{(k)}\right)$\\
     \hspace{3cm}
     $+ \log \left(1 - \sigma(T_\psi(x^{(k)}, \eta^{(k)}) \right)
     \Bigr]$
     \vspace{1ex}
     
     \STATE Perform SGD-updates for $\theta$, $\phi$ and $\psi$:\\
     $\theta \gets \theta + h_i\,g_\theta, \quad
     \phi \gets \phi + h_i\,g_\phi, \quad
     \psi \gets \psi + h_i\,g_\psi$
     \STATE $i \gets i+1$
   \ENDWHILE
\end{algorithmic}
\end{algorithm}

\section{Architecture for MNIST-experiment}
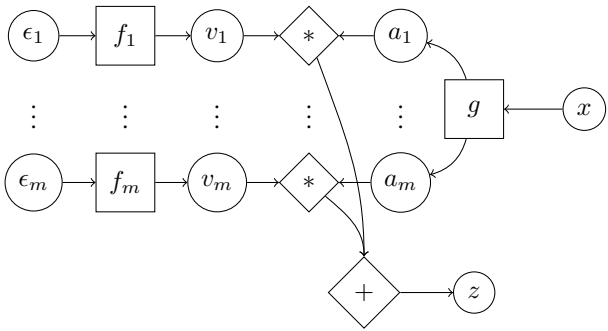
\begin{figure}[h]
\centering
\resizebox{\linewidth}{!}{
\begin{tikzpicture}[scale=0.5]

\tikzstyle{edge}=[]
\tikzstyle{vertex}=[circle,fill=white,draw, minimum width=1.1]
\tikzstyle{function}=[ draw, rectangle, minimum size=0.8cm]
\tikzstyle{operation}=[ draw, diamond,]

% \draw[step=1cm,gray,very thin] (-6,-6) grid (6,6);

 \node[vertex] (eps1)  at (-5, 1) {$\epsilon_1$} ;
 \node (epsdots)  at (-5, -1) {$\vdots$} ;
 \node[vertex] (epsm)  at (-5, -3) {$\epsilon_m$} ;

\node[function](f1) at (-2.5,1) {$f_1$};
\node(fdots) at (-2.5,-1) {$\vdots$};
\node[function](fm) at (-2.5,-3) {$f_m$};

 \node[vertex] (v1)  at (0, 1) {$v_1$} ;
 \node (vdots)  at (0, -1) {$\vdots$} ;
 \node[vertex] (vm)  at (0, -3) {$v_m$} ;

 \node[vertex] (a1)  at (5, 1) {$a_1$} ;
 \node (vdots)  at (5, -1) {$\vdots$} ;
 \node[vertex] (am)  at (5, -3) {$a_m$} ;

\node[function](g) at (7, -1) {$g$};
 \node[vertex] (x)  at (10, -1) {$x$} ;

 \node[operation] (prod1)  at (2.5, 1) {$*$} ;
 \node (epsdots)  at (2.5, -1) {$\vdots$} ;
 \node[operation] (prodm)  at (2.5, -3) {$*$} ;

 \node[operation] (sum)  at (4, -6) {$+$} ;
 \node[vertex] (z)  at (7, -6) {$z$} ;

\draw[->, edge] (eps1) -- (f1);
\draw[->, edge] (f1) -- (v1);
\draw[->, edge] (v1) -- (prod1);

\draw[->, edge] (epsm) -- (fm);
\draw[->, edge] (fm) -- (vm);
\draw[->, edge] (vm) -- (prodm);

\draw[->, edge] (x) -- (g);

\draw[->, edge] (g) edge[bend right] (a1);
\draw[->, edge] (g) edge[bend left] (am);

\draw[->, edge] (a1) -- (prod1);
\draw[->, edge] (am) -- (prodm);

\draw[->, edge] (prod1) edge[ out=-70,in=90] (sum);
\draw[->, edge] (prodm) edge[out=-40,in=90] (sum);

\draw[->, edge] (sum) -- (z);

\end{tikzpicture}
}
\caption{\label{fig:network-mnist}Architecture of the network used for the MNIST-experiment}
\end{figure}
To apply Adaptive Contrast to our method, we have to be able to efficiently estimate the moments of the current inference model $q_\phi(z\mid x)$. 
To this end, we propose a network architecture like in Figure \ref{fig:network-mnist}. The final output $z$ of the network is a linear combination of basis noise vectors where the coefficients
depend on the data point $x$, i.e.
\begin{equation}
 z_k = \sum_{i=1}^m v_{i,k}(\epsilon_i)a_{i,k}(x).
\end{equation}
The noise basis vectors $v_i(\epsilon_i)$ are defined as the output of small fully-connected neural networks $f_i$ acting on normally-distributed random noise $\epsilon_i$, the coefficient vectors
$a_i(x)$ are defined as the output of a deep convolutional neural network $g$ acting on $x$.

The moments of the $z_i$ are then given by
\begin{align}
 \E(z_k) & = \sum_{i=1}^m \E [v_{i,k}(\epsilon_i)] a_{i,k}(x). \\
 \Var(z_k) & = \sum_{i=1}^m \Var [v_{i,k}(\epsilon_i)] a_{i,k}(x)^2. 
\end{align}
By estimating $\E [v_{i,k}(\epsilon_i)]$ and $\Var [v_{i,k}(\epsilon_i)]$ via sampling once per mini-batch, we can efficiently compute the moments of $q_\phi(z\mid x)$ for all the data points $x$ in a single mini-batch.

\section{Additional Experiments}

\paragraph{celebA}

\begin{figure}[t]
\centering
\begin{subfigure}[b]{0.45\linewidth}
\centering
\includegraphics[width=\linewidth]{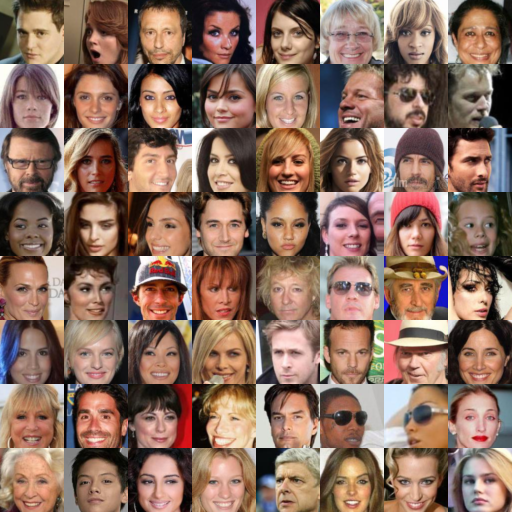}
\caption{Training data}
\end{subfigure}
\begin{subfigure}[b]{0.45\linewidth}
\centering
\includegraphics[width=\linewidth]{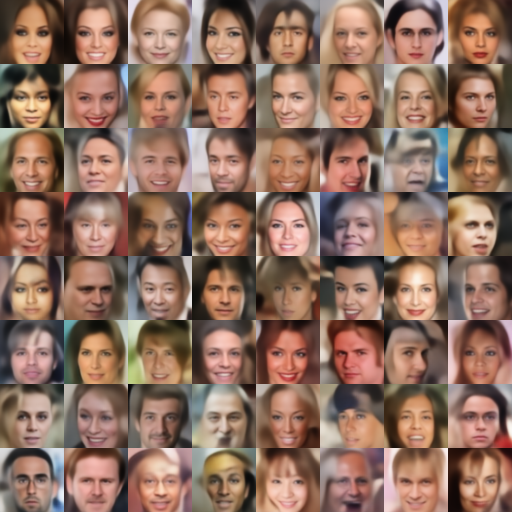}
\caption{Random samples}
\end{subfigure}
\caption{\label{fig:celebA-samples}Independent samples for a model trained
on celebA. 
}
\vspace{0.5cm}
\begin{centering}
\includegraphics[height=0.65\linewidth]{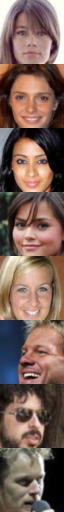}\quad{}
\includegraphics[height=0.65\linewidth]{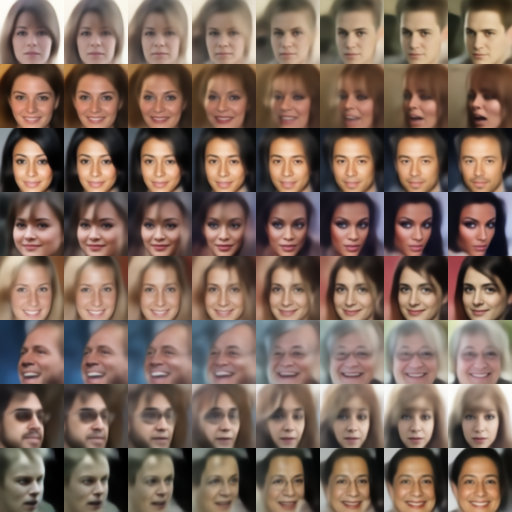}\quad{}
\includegraphics[height=0.65\linewidth]{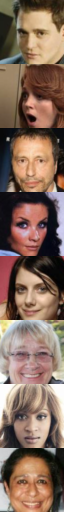}
\caption{\label{fig:celebA-interpolations}Interpolation experiments for celebA}

\par\end{centering}
\end{figure}

We also used AVB (without AC) to train a deep convolutional network on the celebA-dataset 
\cite{liu2015faceattributes} for a $64$-dimensional latent space with $\mathcal N(0,1)$-prior. For the decoder and adversary we use two 
deep convolutional neural networks acting on $x$ like in \citet{radford2015unsupervised}. We add the noise $\epsilon$ and the 
latent code $z$ to 
each hidden layer via a learned projection matrix. Moreover, in the encoder and 
decoder we use three RESNET-blocks \cite{he2015deep} at each scale of the neural network. 
We add the log-prior $\log p(z)$ explicitly to the adversary $T(x,z)$, so that it only has to learn
the log-density of the inference model $q_\phi(z \mid x)$.

The  samples for celebA are
shown in Figure \ref{fig:celebA-samples}. We see that our model produces 
visually sharp images of faces.
To demonstrate that the model has indeed learned an abstract representation of 
the data, we show reconstruction results and
the result of linearly interpolating the $z$-vector in the latent space in 
Figure \ref{fig:celebA-interpolations}. We see that the reconstructions
are reasonably sharp and the model produces realistic images for all 
interpolated $z$-values.

\paragraph{MNIST}
\begin{table}[t]

\begin{subtable}[b]{\linewidth}
\centering

\resizebox{\linewidth}{!}{
 \begin{tabular}{ l | c c c}
 
   & ELBO & AIS & reconstr. error \\
  \hline
  AVB + AC & $\approx -85.1 \pm 0.2$ & $-83.7 \pm 0.3$ & $59.3 \pm 0.2$\\
  VAE &  $-88.9 \pm 0.2$ & $-85.0 \pm 0.3$ & $62.2 \pm 0.2$\\
  auxiliary VAE&  $-88.0 \pm 0.2$ & $-83.8 \pm 0.3$ & $62.1 \pm 0.2$\\
  VAE + IAF &  $-88.9 \pm 0.2$ & $-84.9 \pm 0.3$ & $62.3 \pm 0.2$
\end{tabular}
}
\caption{fully-connected decoder ($\mathrm{dim}(z)=32$)}
\label{tab:mnist-full32}
\end{subtable}
\vspace{0.5cm}

\begin{subtable}[b]{\linewidth}
\centering
\resizebox{\linewidth}{!}{
 \begin{tabular}{ l | c c c}
 
   & ELBO & AIS & reconstr. error \\
  \hline
  AVB + AC & $\approx -93.8 \pm 0.2$ & $-89.7 \pm 0.3$ & $76.4 \pm 0.2$\\
  VAE &  $-94.9 \pm 0.2$ & $-89.9 \pm 0.4$ & $76.7 \pm 0.2$\\
  auxiliary VAE&  $-95.0\pm 0.2$ & $-89.7 \pm 0.3$ & $76.8 \pm 0.2$\\
  VAE + IAF &  $-94.4 \pm 0.2$ & $-89.7 \pm 0.3$ & $76.1 \pm 0.2$\\
\end{tabular}
}
\caption{convolutional decoder ($\mathrm{dim}(z)=8$)}
\label{tab:mnist-hvi8}
\end{subtable}
\vspace{0.5cm}

\begin{subtable}[b]{\linewidth}
\centering
\resizebox{\linewidth}{!}{
 \begin{tabular}{ l | c c c}
 
   & ELBO & AIS & reconstr. error \\
  \hline
  AVB + AC & $\approx -82.7 \pm 0.2$ & $-81.7 \pm 0.3$ & $57.0 \pm 0.2$\\
  VAE &  $-85.7 \pm 0.2$ & $-81.9 \pm 0.3$ & $59.4 \pm 0.2$\\
  auxiliary VAE&  $ -85.6 \pm 0.2$ & $-81.6 \pm 0.3$ & $ 59.6 \pm 0.2$\\
  VAE + IAF &  $-85.5 \pm 0.2$ & $-82.1 \pm 0.4$ & $59.6 \pm 0.2$\\
\end{tabular}
}
\caption{convolutional decoder ($\mathrm{dim}(z)=32$)}
\label{tab:mnist-hvi32}
\end{subtable}

\end{table}

To evaluate how AVB with adaptive contrast compares against other methods on a fixed decoder architecture, we reimplemented the methods from 
\citet{maaloe2016auxiliary} and \citet{kingma2016improving}. The method from \citet{maaloe2016auxiliary} tries to make the variational approximation to the posterior more
flexible by using auxiliary variables, the method from \citet{kingma2016improving} tries to improve the variational approximation by employing an Inverse Autoregressive Flow (IAF), a particularly flexible instance
of a normalizing flow \cite{rezende2015variational}. In our experiments, we compare AVB with adaptive contrast to a standard VAE with diagonal Gaussian inference model as well as the methods from \citet{maaloe2016auxiliary} and \citet{kingma2016improving}.

In our first experiment, we evaluate all methods on training a decoder that is given by a fully-connected neural network with ELU-nonlinearities and two hidden layers with $300$ units each. The prior distribution $p(z)$ is given by a $32$-dimensional standard-Gaussian distribution.

The results are shown in Table~\ref{tab:mnist-full32}. We observe, that both AVB and the VAE with auxiliary variables achieve a better (approximate) ELBO than a standard VAE. When evaluated using AIS, both methods result in similar log-likelihoods. However, AVB results in a better reconstruction error than an auxiliary variable VAE and a better (approximate) ELBO. We observe that our implementation of a VAE with IAF did not improve on a VAE with diagonal Gaussian inference model. We suspect that this due to optimization difficulties.

In our second experiment, we train a decoder that is given by the shallow convolutional neural network described in \citet{salimans2015markov} with $800$ units in the last fully-connected hidden layer. The prior distribution $p(z)$ is given by either a $8$-dimensional or a $32$-dimensional standard-Gaussian distribution. 
 
The results are shown in Table~\ref{tab:mnist-hvi8} and Table~\ref{tab:mnist-hvi32}. Even though AVB achieves a better (approximate) ELBO and a better reconstruction error for a $32$-dimensional latent space, all methods achieve similar log-likelihoods for this decoder-architecture, raising the question if strong inference models are always necessary to obtain a good generative model. Moreover, we found that neither auxiliary variables nor IAF did improve the ELBO. Again, we believe this is due to optimization challenges.

\end{document}